  \providecommand\BibTeX{{%
    \normalfont B\kern-0.5em{\scshape i\kern-0.25em b}\kern-0.8em\TeX}}}
\newtheorem{theorem}{Theorem}
\newtheorem{lemma}{Lemma}
\DeclareMathOperator*{\argmax}{arg\,max}
\DeclareMathOperator*{\argmin}{arg\,min}
\newtheorem{corollary}{Corollary}
\newtheorem{definition}{Definition}
\newtheorem{proposition}{Proposition}
\newcommand{\cmark}{\checkmark}
\newcommand{\xmark}{\ding{53}}
\theoremstyle{remark}
\newtheorem{strategy}{Strategy}
\begin{document}

\title{Clustered Embedding Learning for Recommender Systems} 

\author{Yizhou Chen}\authornote{Both authors contributed equally to the paper.}

\author{Guangda Huzhang}\authornotemark[1]
\email{ {yizhou.chen, guangda.huzhang} }
\email{@shopee.com}
\affiliation{
  \institution{Shopee Pte Ltd., Singapore}
\country{}
}

\author{Anxiang Zeng}
\email{zeng0118@ntu.edu.sg}
\affiliation{
  \institution{SCSE, Nanyang Technological University, Singapore}
\country{}
}

\author{Qingtao Yu, Hui Sun}
\author{Heng-Yi Li, Jingyi Li}
\email{ {qingtao.yu, halsey.sun, hengyi.li, lijingyi} }
\email{@shopee.com}

\affiliation{
  \institution{Shopee Pte Ltd., Singapore}
\country{}
}

\author{Yabo Ni}
\email{yabo001@e.ntu.edu.sg}
\affiliation{
  \institution{SCSE, Nanyang Technological University, Singapore}
  \country{}
}

\author{Han Yu}
\email{han.yu@ntu.edu.sg}
\affiliation{
  \institution{SCSE, Nanyang Technological University, Singapore}
\country{}
}

\author{Zhiming Zhou}
\authornote{Corresponding author.}

\email{zhouzhiming@mail.shufe.edu.cn}
\affiliation{
  \institution{Shanghai University of Finance and Economics, Shanghai, China}
\country{\hspace{1pt}} 
}

\renewcommand{\shortauthors}{Chen, Huzhang and Zeng, et al.}

\begin{abstract}
In recent years, recommender systems have advanced rapidly, where embedding learning for users and items plays a critical role. A standard method learns a unique embedding vector for each user and item. However, such a method has two important limitations in real-world applications: 1) it is hard to learn embeddings that generalize well for users and items with rare interactions; 
and 2) it may incur unbearably high memory costs when the number of users and items scales up. Existing approaches either can only address one of the limitations or have flawed overall performances. 
In this paper, we propose Clustered Embedding Learning (CEL) as an integrated solution to these two problems. CEL is a plug-and-play embedding learning framework that can be combined with any differentiable feature interaction model. It is capable of achieving improved performance, especially for cold users and items, with reduced memory cost. CEL enables automatic and dynamic clustering of users and items in a top-down fashion, where clustered entities jointly learn a shared embedding. The accelerated version of CEL has an optimal time complexity, which supports efficient online updates. 
Theoretically, we prove the identifiability and the existence of a unique optimal number of clusters for CEL in the context of nonnegative matrix factorization. Empirically, we validate the effectiveness of CEL on three public datasets and one business dataset, showing its consistently superior performance against current state-of-the-art methods. In particular, when incorporating CEL into the business model, it brings an improvement of $+0.6\%$ in AUC, which translates into a significant revenue gain; meanwhile, the size of the embedding table gets $2650$ times smaller.\footnote{The code is avaliable at: \url{https://doi.org/10.5281/zenodo.7620448}.}
\end{abstract}

\begin{CCSXML}
<ccs2012>
   <concept>
       <concept_id>10010147.10010257.10010293.10010319</concept_id>
       <concept_desc>Computing methodologies~Learning latent representations</concept_desc>
       <concept_significance>500</concept_significance>
    </concept>  
    
    <concept>
        <concept_id>10002951.10003317.10003347.10003350</concept_id>
        <concept_desc>Information systems~Recommender systems</concept_desc>
        <concept_significance>500</concept_significance>
    </concept>
    
    <concept>
       <concept_id>10010405.10003550</concept_id>
       <concept_desc>Applied computing~Electronic commerce</concept_desc>
       <concept_significance>500</concept_significance>
    </concept>
 </ccs2012>
\end{CCSXML}

\ccsdesc[500]{Computing methodologies~Learning latent representations}
\ccsdesc[500]{Information systems~Recommender systems}
\ccsdesc[500]{Applied computing~Electronic commerce}

\keywords{Embedding learning, clustering, recommender system, large-scale application, sparse data}

\maketitle

\section{Introduction}

Recommender systems, which support many real-world applications, have become increasingly accurate in recent years. 
Important improvements are brought about by the use of deep learning \cite{cheng2016wide,guo2017deepfm} and the leverage of user historical data~\cite{din,dien}. In these systems, embedding learning (i.e., learning vector representations) for users and items plays a critical role.

The standard method, \emph{full embedding}, learns a unique vector representation for each user and item. However, this approach faces important limitations in practice. 
Firstly, for cold users and items that have insufficient historical interactions (which commonly exist in practice), it is hard for them to learn an embedding that generalizes well on their own. 
Secondly, web-scale recommender systems usually involve an enormous number of users and items; the memory cost incurred by full embedding can be prohibitively high: 1 billion users, each assigned a $64$ dimensional embedding in $32$-bit floating point, require $238$ GB of memory.

To mitigate the problem of cold users and items, a common idea is to impose regularization, e.g., forcing all embeddings to follow a given clustering structure \cite{eTREE,jnkm}. However, using a fixed predefined clustering structure lacks flexibility; the given clustering structure can hardly be optimal. Besides, existing methods \cite{eTREE,jnkm} require training in full embedding, still suffering from high memory costs. 

To mitigate the problem of high memory cost, hashing \cite{modulo,binaryhash,hashgnn,prehash} is typically adopted to map entities to a fixed-size embedding table. This can directly tackle the problem of memory cost. However, forcing a hashing that is essentially random \cite{modulo,binaryhash} leads to unsatisfactory performance; the performances of current learning-to-hash methods \cite{hashgnn,prehash} also have a large space for improvements. Besides, pretraining a hash function \cite{hashgnn} or binding each hash bucket with a warm user \cite{prehash} also makes these methods lack flexibility.

In chasing a more flexible and integrated solution to these two problems, while at the same time improving the performance, we propose in this paper Clustered Embedding Learning (CEL). CEL achieves dynamic and automatic clustering of users and items, which gives it the potential to have superior performance. In CEL, entities within the same cluster may use and jointly learn a shared embedding; thus it can significantly reduce the memory cost. 

CEL performs clustering in a top-down (i.e., divisive) fashion, starting with a single cluster that includes all users (or items) and recursively splitting a cluster into two. Given that we want to share information among similar entities, such divisive clustering is a natural choice: sharing information of all entities at the beginning, then seeking to refine the clustering when possible. 

Though being a natural thought, implementing divisive clustering for embedding learning is challenging. In the circumstance of \textit{shared embedding}, it appears to have no standard to differentiate entities within the same cluster. To tackle this problem, we propose to refer to their gradients with respect to the shared embedding, since the differences in the gradients indicate their different desired optimization directions of the embedding. Then naturally, we propose to split the cluster along the first principal component of the gradients. 

After cluster split and associated embedding optimizations, the current cluster assignment might have room for improvement, e.g., the embedding of some other cluster might better fit the interactions of an entity. Thus, fine-grained automatic cluster optimization also requires enabling cluster reassignment. Given that the training loss appears to be the only referable metric, we propose to conduct cluster reassignment based on direct loss minimization. 

With such a cluster split and reassignment paradigm, CEL is ideally a sustainable framework that can not only support starting with zero data, but also can automatically increase the number of clusters, as well as automatically optimize the cluster reassignments, as the interaction data increases and the number of users/items increases.

\begin{figure*}[ht]
    \centering
    \includegraphics[width=0.96\linewidth]{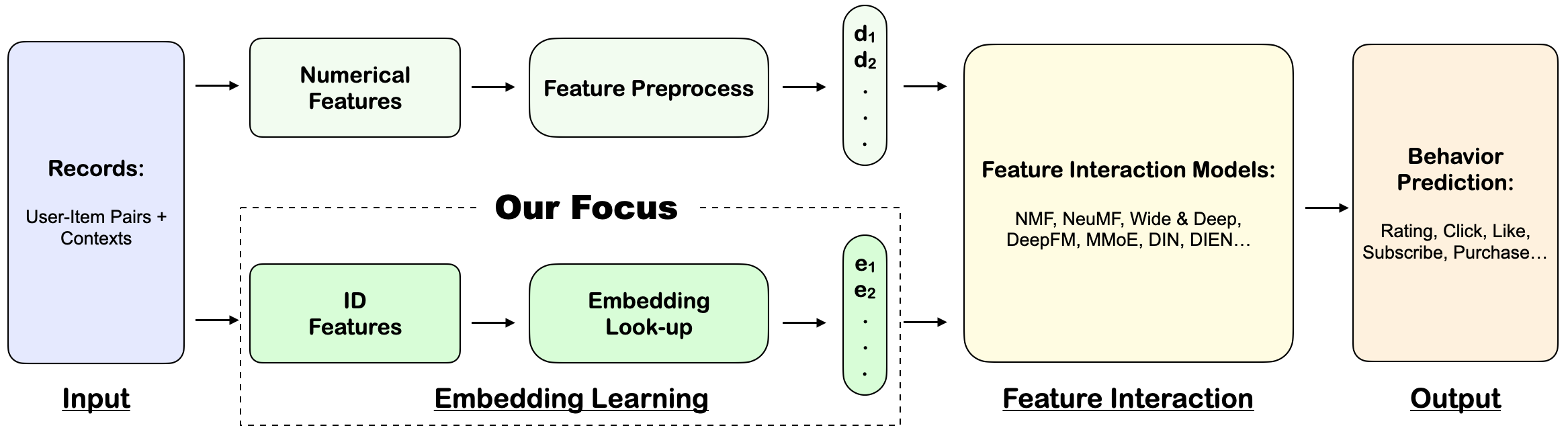}
    \caption{The standard behavior prediction paradigm in recommender system.}
    \label{fig:rankframework}
\end{figure*}

We validate the proposed CEL both theoretically and empirically. Firstly, we prove the identifiability and the existence of a unique optimal number of clusters for CEL in the context of nonnegative matrix factorization. Then, we show that the accelerated version of CEL (CEL-Lite) has an optimal time complexity, which allows efficient online updates. Finally, we conduct extensive experiments on public real-world datasets as well as a private business dataset, integrated with various feature interaction models (including NMF, DIN, and DeepFM); the results show that CEL consistently outperforms current state-of-the-art methods. In particular, when incorporating
CEL into the business model, it brings an improvement of $+0.6\%$ in AUC, which translates into a significant revenue gain; meanwhile, the size of the embedding table gets 2650 times smaller.

\section{Related Works}

One line of research regularizes the embeddings to follow a given clustering structure to improve their qualities. 
For example, Joint NMF and K-Means (JNKM) \cite{jnkm} forces the embeddings to follow a K-clustering structure; eTree \cite{eTREE} learns the embeddings under an implicit tree (i.e., hierarchical clustering). Prior works in this line also include {HSR} \cite{HSR}, {HieVH} \cite{sun2017exploiting}, etc. 
The major distinction of our work is that CEL does not predefine the clustering structure. Instead, it automatically optimizes the number of clusters and the cluster assignment. Besides, these methods require training in full embedding size, while CEL is trained with reduced embedding size.

Another line of work uses the hash to share and jointly learn the embeddings. The plain version applies the modulo operation \cite{modulo} on the hash ID of each entity to obtain its hash code (i.e., the embedding index). Sharing embedding within each hash bucket, the embedding size can be reduced by the magnitude of the modulo divisor. 
Binary Hash (BH) \cite{binaryhash,zhang2016discrete} adopts a code block strategy on binarized hash IDs to reduce direct collisions. 
Adaptive Embedding (AE) \cite{DeepRec} explicitly avoids collisions for warm users who have sufficient historical data. 
However, the hash assignments in these methods are essentially random and will not adapt/learn during learning. 
There are also several works that tried to learn a hash function. For example, HashGNN \cite{hashgnn} learns a binary code hash function with graph neural networks. However, it requires pretraining a hash function. 
Preference Hash (PreHash) \cite{prehash} learns to hash users with similar preferences to the same bucket. PreHash binds a warm user to each bucket, while the embedding of other users is a weighted sum of the embeddings of their top-$K$ related buckets. By contrast, CEL does not force bindings between users and clusters. 

Some other works seek to reduce memory cost by reducing the embedding dimension or the number of bits for each dimension (i.e., quantization) \cite{Mixeddimension,Automatedembedding,amtl,zhang2016discrete,yang2019quantization}, which are orthogonal to our work in the perspective of reducing the memory cost. 
The type of clustering algorithms most relevant to CEL is Divisive Hierarchical Clustering (DHC) \cite{dhcreview}. 
Particularly, Principal Direction Divisive Partitioning (PDDP) \cite{pddp} splits clusters via principal coordinate analysis \cite{PCoA}, which is remotely related to our proposed GPCA (Section~\ref{sec:gpca}).

\begin{figure*}[ht]
    \centering
    \includegraphics[width=0.98\linewidth]{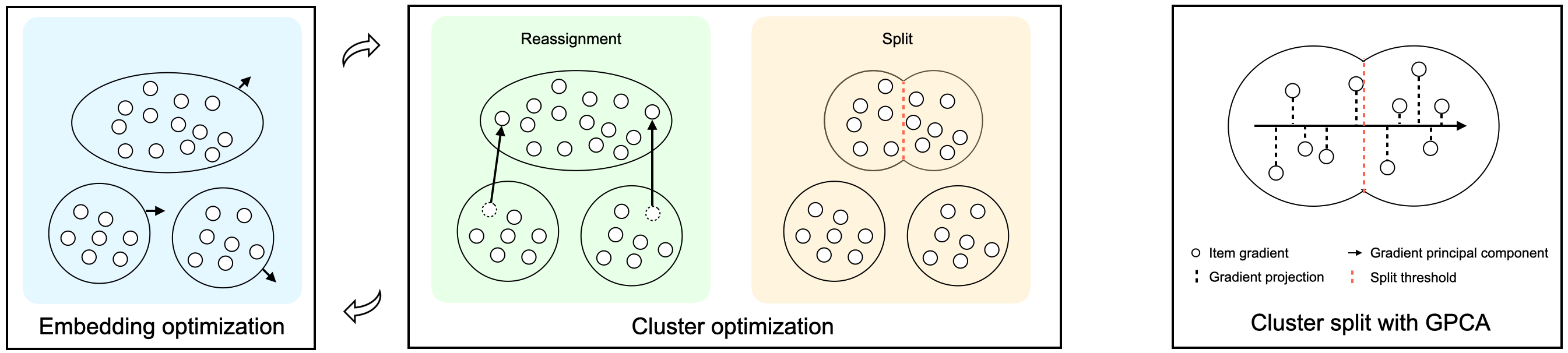}
    \\
    \hspace{35mm}(a)\hspace{80mm}(b)
    \caption{(a) An overview of CEL: it alternatively performs embedding optimization and cluster optimization. (b) The illustration of GPCA: it splits clusters along the first principal component of item gradients.}
    \label{fig:celframework}
\end{figure*}
\section{The Proposed Framework} \label{sec:methodology}

In this section, we present Clustered Embedding Learning (CEL) as a plug-and-play embedding learning framework which can be integrated with any differentiable feature interaction model such as Nonnegative Matrix Factorization (NMF), NeuMF~\cite{neuralCF}, Wide\&Deep \cite{cheng2016wide}, DeepFM~\cite{guo2017deepfm}, MMoE~\cite{ma2018modeling}, DIN~\cite{din} and DIEN~\cite{dien}. We illustrate the role of embedding learning in the behavior prediction  that is commonly used in recommender systems in Figure~\ref{fig:rankframework}.

Assume we have a user-item interaction data matrix $\mathbf{X}\in\mathbb{R}^{N\times M}$ of $N$ users and $M$ items, where $\mathbf{X}(i,j)$ denotes the interaction (rating, click, etc.) of the $i^{th}$ user on the $j^{th}$ item. We aim to recover the data matrix with two low-rank matrices, i.e.,  $\mathbf{X}\approx Y(\mathbf{A},\mathbf{B})$, where $\mathbf{A}\in\mathbb{R}^{N\times R}$ and $\mathbf{B}\in\mathbb{R}^{M\times R}$ can be regarded as the user embedding matrix and the item embedding matrix, respectively. Each row of $\mathbf{A}$ (or $\mathbf{B}$) corresponds to the embedding of a user (or item). $R$ is the embedding dimension. 
The prediction $Y(\mathbf{A},\mathbf{B})=[y_{ij}]=[y(\mathbf{A}_i,\mathbf{B}_j)]\in\mathbb{R}^{N\times M}$ is produced by a differentiable feature interaction model $y$ that takes the embedding of user $i$ and the embedding of item $j$ as inputs. 
Generally, embedding learning can be formulated as an optimization problem: 
\begin{equation}
    \argmin_{\mathbf{A},\mathbf{B}} \; \mathcal{L},
    \;
    \mathcal{L}=\big\|\mathbf{W}\odot(\mathbf{X}-Y(\mathbf{A},\mathbf{B}))\big\|_{F}^2 ,
\label{equ:nmf_objective}
\end{equation}
where $\mathbf{W} \in \{0, 1\}^{N\times M}$ contains 1s at the indices of observed entries in $\mathbf{X}$, and 0s otherwise. $F$ denotes the Frobenius norm and $\odot$ denotes element-wise multiplication. 

CEL clustering can be similarly applied to users and items. For clarity, we will only discuss the clustering of items. To initiate the clustering of items, we decompose the item embedding matrix as a product following a clustering structure: 
\begin{equation}
    \mathbf{B}=\mathbf{S}_q\mathbf{B}_q,\;\;q=\{0, 1, 2, \ldots\},
\label{equ:sqbq}
\end{equation}
where $\mathbf{S}_q\in \{0, 1\}^{M\times M_q}$ and $\mathbf{B}_q\in\mathbb{R}^{M_q\times R}$ denote the cluster assignment matrix and the cluster embedding matrix of items, respectively. $M_q$ denotes the current number of clusters and $q$ denotes the current number of splits. We adopt hard cluster assignment: each row of $\mathbf{S}_q$ has exactly one $1$, while the other entries are zero. 

CEL solves Eq.~\eqref{equ:nmf_objective} by alternatively performing embedding optimization and cluster optimization (Figure~\ref{fig:celframework}).

\subsection{Embedding Optimization}

In embedding optimization, we optimize $\mathbf{A}$ and $\mathbf{B}_q$ subject to a fixed $\mathbf{S}_q$.
The optimization of $\mathbf{A}$ can be directly conducted with Eq. \eqref{equ:nmf_objective} and \eqref{equ:sqbq}. Since hard cluster assignment is adopted, the embedding of each cluster can be optimized separately. 

The loss of the $k^{th}$ cluster embedding of $\mathbf{B}_q$ can be denoted as: 
\begin{equation}
\begin{split}
    \mathcal{L}_k=\big\|\mathbf{W}\odot\big[&\mathbf{X}\cdot \text{diag}(\mathbf{S}_q(:,k))
    \\
    &-Y(\mathbf{A},\mathbf{S}_q(:,k)\mathbf{B}_q(k,:))\big]\big\|_F^2,
\end{split}
\end{equation}
where colon $(:)$ stands for the slicing operator that takes all values of the respective field, and $\text{diag}(\cdot)$ maps a vector into a diagonal matrix. It has 
$\mathcal{L}=\sum_{k=1}^{M_q}\mathcal{L}_k$.

\subsection{Cluster Optimization}

In this section, we introduce the two proposed cluster optimization operations: cluster reassignment and cluster split. We assume that the reassignment occurs every $t_1$ steps of embedding optimization, while the split occurs every $t_2$ steps of embedding optimization. 

\subsubsection{Cluster Reassignment}

An item assigned to a sub-optimal cluster can result in performance loss. When the embeddings have been optimized, we may perform cluster reassignment so that an item can be assigned to a better cluster. Reassignment is an important step toward automatically grouping similar items.

One problem CEL faces is how to measure the distance between an item and a cluster, since their representations are in different domains: items have interaction data, while clusters have cluster embeddings. A natural metric seems to be the fitness of the cluster embedding to the item. Thus, we seek to directly minimize the loss of each item w.r.t. their cluster assignments:
\begin{equation}
    \mathbf{S}_q =
    \argmin_{\mathbf{S}_q} \; \big\|\mathbf{W}\odot(\mathbf{X}-Y(\mathbf{A},\mathbf{S}_q\mathbf{B}_q))\big\|_{F}^2.
\label{equ:reassignment}
\end{equation}
With fixed embeddings, the reassignment of each item is independent, therefore Eq. \eqref{equ:reassignment} can also be equivalently formulated as a set of sub-problems, one for each item:
\[
\mathbf{S}_q(j,:) =
\argmin_{\mathbf{S}_q(j,:)} \; \big\|\mathbf{W}(:,j)\odot(\mathbf{X}(:,j)-Y(\mathbf{A},\mathbf{S}_q(j,:)\mathbf{B}_q))\big\|_{F}^2.
\]

\subsubsection{Cluster Split with GPCA} \label{sec:gpca}

With embeddings optimized, we may consider cluster split to achieve a finer clustering. Each time we select one cluster and split it into two. There are many heuristic criteria for choosing the cluster. For example, we may choose the cluster that has the largest: total loss, mean loss, number of associated items, or number of associated interactions ($\|\mathbf{W}\mathbf{S}_q(:,k)\|_0$). According to our experiments, they generally lead to similarly good results. This demonstrates the robustness of CEL upon cluster-choosing criteria. In this paper, we adopt the number of associated interactions as the default criterion.

The key challenge for CEL is how to split a cluster. With hard cluster reassignment and embedding sharing, embeddings within a cluster are the same. Thus we need a representation that can differentiate items in the same cluster. 
To tackle this problem, we propose Gradient Principal Component Analysis (GPCA) that splits clusters according to items' gradients w.r.t. their shared embedding, which indicates their different desired optimization directions of the embedding. 
Such a split strategy can help achieve finer-grained clustering of similar items while further minimizing the loss. 

Specifically, we assemble the gradients of items of a given cluster into a matrix $\mathbf{G}=[\ldots, g_j,\ldots]^\top$, $\forall j\in\{j|\mathbf{S}_q(j,k)=1\}$, where $g_j=-\partial\mathcal{L}/\partial \mathbf{B}(j,:)$ and $k$ denotes the cluster to be split. 
Then, we normalize $\mathbf{G}$ (as a standard procedure before performing PCA, without changing its notation) and compute its first principal component: 
\begin{equation}
\mathbf{p}=\argmax_{\|\mathbf{p}\|_2=1}\;\mathbf{p}^\top\mathbf{G}^\top\mathbf{G}\mathbf{p}.
\label{equ:pcavector}
\end{equation}
After that, we split the cluster into two according to their first principal component scores $g_j^\top\mathbf{p}$:
\begin{equation}
\hspace{-2mm}
    \left\{
        \begin{array}{ll}
          \mathbf{S}_{q+1}(j,k)=1,
          \;\;\;\;\;\;\;\;\;\; \text{ if }g_j^\top\mathbf{p}< \delta; \\
          \mathbf{S}_{q+1}(j,M_q+1)=1,
          \;\; \text{ if }g_j^\top\mathbf{p}\geq \delta.
        \end{array}
    \right.
\label{equ:pca score}
\end{equation}
One set of items is still assigned to cluster $k$, while the other set is assigned to a new cluster $M_q+1$. We may adjust the threshold $\delta$ to control/balance the number of items in each cluster. 

We present the pseudocode of CEL in Algorithm \ref{alg:cel}.

\subsection{Theoretical Analysis of CEL under NMF}
\label{sec:identifiability}

CEL can be integrated with any differentiable feature interaction model, however, most of them (e.g., when involving other features or integrated with neural networks) are hard to analyze. Even so, we are able to study the theoretical properties of CEL when it is integrated with Nonnegative Matrix Factorization (NMF). These analyses can to some extent demonstrate the validity of CEL. 

In NMF, $Y(\mathbf{A}, \mathbf{B}) = \mathbf{A} \mathbf{B}^\top$ and $\mathbf{A}, \mathbf{B}\geq 0$, which factorizes the data matrix $\mathbf{X}$ into the product of two low-rank matrices. Despite that low-rank matrix factorization generally has no unique solution, nonnegativity helps establish the identifiability of NMF. NMF can thus produce essentially unique low-dimensional representations \cite{lee1999learning}. The NMF of $\mathbf{X} = \mathbf{A} \mathbf{B}^\top$ is regarded as \emph{essentially unique} if the nonnegative matrix $\mathbf{A}$ and $\mathbf{B}$ are identifiable up to a permutation and scaling. We show that the solution of CEL is identifiable: 

\begin{theorem}
(Identifiability of CEL) Assume that the data matrix $\mathbf{X}$ follows $\mathbf{X} = \mathbf{A} \mathbf{B}_q^\top \mathbf{S}_q^\top$, where $\mathbf{A} \in\mathbb{R}^{N\times R}$, $\mathbf{B}_q\in \mathbb{R}^{M_q\times R}$, $\mathbf{S}_q \in \{0,1\}^{M\times M_q}$, $\|\mathbf{S}_q(j,:)\|_0 = 1,\forall j\in [1,M]$, $rank(\mathbf{X}) = rank(\mathbf{A}) = R$, $\mathbf{B}_q$ has no repeated rows and, without loss of generality, $M_q\geq R$. If $\mathbf{S}_q$ is of full-column rank and rows of $\mathbf{B}_q$ are sufficiently scattered, then $\mathbf{A}$, $\mathbf{B}_q$ and $\mathbf{S}_q$ are essentially unique.
\label{theorem:CELIdentifiability}
\end{theorem}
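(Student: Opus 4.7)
The plan is to reduce the claim to the classical essential-uniqueness result for NMF, and then to separate the cluster assignment from the cluster-level embedding by a combinatorial argument. Define $\mathbf{B} := \mathbf{S}_q\mathbf{B}_q \in \mathbb{R}^{M\times R}$, so that the hypothesis rewrites as the plain rank-$R$ NMF $\mathbf{X} = \mathbf{A}\mathbf{B}^\top$ with both factors nonnegative (since $\mathbf{S}_q\in\{0,1\}^{M\times M_q}$ and $\mathbf{B}_q\geq 0$ in the NMF setting). Because $\mathrm{rank}(\mathbf{X})=\mathrm{rank}(\mathbf{A})=R$ and $\mathbf{A}$ has $R$ columns, the factor $\mathbf{B}$ must also have rank $R$, so the factorization is of minimal inner dimension.

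Next I would verify that $\mathbf{B}$ inherits the sufficiently scattered property from $\mathbf{B}_q$. Each row of $\mathbf{S}_q$ is a standard basis vector, so every row of $\mathbf{B}$ is a copy of some row of $\mathbf{B}_q$; the assumption that $\mathbf{S}_q$ is of full column rank ensures every row of $\mathbf{B}_q$ appears at least once in $\mathbf{B}$. Hence the conic hull of the rows of $\mathbf{B}$ coincides with that of $\mathbf{B}_q$, and the sufficiently scattered condition carries over verbatim. With this in hand, I invoke the standard NMF identifiability theorem (Huang--Sidiropoulos--Swami; Fu et al.; see also the framing in \cite{lee1999learning}), which states that if one factor is sufficiently scattered and the other has full column rank, then the pair $(\mathbf{A},\mathbf{B})$ is essentially unique: any alternative nonnegative factorization $\mathbf{X}=\widetilde{\mathbf{A}}\widetilde{\mathbf{B}}^\top$ must satisfy $\widetilde{\mathbf{A}}=\mathbf{A}\mathbf{\Pi}\mathbf{D}$ and $\widetilde{\mathbf{B}}=\mathbf{B}\mathbf{\Pi}\mathbf{D}^{-1}$ for some permutation $\mathbf{\Pi}$ and positive diagonal $\mathbf{D}$.

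The final step is to split $\mathbf{B}$ back into $\mathbf{S}_q$ and $\mathbf{B}_q$. Because $\mathbf{S}_q$ is a hard cluster-assignment matrix, two rows of $\mathbf{B}$ are equal precisely when the corresponding items share a cluster; since $\mathbf{B}_q$ has no repeated rows, the equivalence classes induced by row equality in $\mathbf{B}$ coincide uniquely with the clusters encoded by $\mathbf{S}_q$. Full column rank of $\mathbf{S}_q$ guarantees that all $M_q$ classes are non-empty, so the partition exactly reproduces $\mathbf{S}_q$ up to a relabeling of columns, and $\mathbf{B}_q$ is then read off by taking one representative row of $\mathbf{B}$ per class. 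Composing this recovery with the permutation/scaling ambiguity from the NMF identifiability step yields the essential uniqueness of the triple $(\mathbf{A},\mathbf{B}_q,\mathbf{S}_q)$.

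The main obstacle I anticipate is the careful invocation of the sufficiently-scattered-based identifiability theorem: one has to verify that the formulation of ``sufficiently scattered'' assumed in the statement is exactly the one that yields the essential uniqueness conclusion in the rectangular regime $M_q\geq R$, and that passing from $\mathbf{B}_q$ to $\mathbf{S}_q\mathbf{B}_q$ does not strengthen or weaken the geometric condition. Once this reduction is justified, the combinatorial recovery of $\mathbf{S}_q$ from the repeated-row pattern of $\mathbf{B}$ is elementary, and the theorem follows.
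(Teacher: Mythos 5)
Your proposal is correct and follows essentially the same route as the paper's own proof: collapse $\mathbf{S}_q\mathbf{B}_q$ into a single factor, note that full column rank of $\mathbf{S}_q$ makes its conic hull (hence the sufficiently scattered condition) identical to that of $\mathbf{B}_q$, invoke the standard sufficiently-scattered NMF identifiability lemma for $(\mathbf{A},\mathbf{S}_q\mathbf{B}_q)$, and then recover $\mathbf{S}_q$ and $\mathbf{B}_q$ from the repeated-row pattern using the no-repeated-rows and $\{0,1\}$ constraints. Your recovery step is in fact spelled out a bit more explicitly than the paper's, but it is the same argument.
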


The formal definitions (e.g., sufficiently scattered) \cite{eTREE,jnkm,identifiability} and detailed proofs are provided in Appendix 
B. 

We assume $\mathbf{S}_q$ to be of full-column rank, which means there is no empty cluster. Enforcing such a condition in our implementation has improved the model performance. 
The assumption of $\mathbf{B}_q$ being sufficiently scattered is common and is likely to be satisfied as shown by \cite{eTREE}. 
Based on Theorem~\ref{theorem:CELIdentifiability}, we can further deduce the existence of a unique optimal number of clusters:

\begin{proposition}
(Optimal Number of Clusters of CEL) Under the assumptions of Theorem \ref{theorem:CELIdentifiability} and further assume that the data matrix can also be decomposed as $\mathbf{X} = \mathbf{A}' \mathbf{B}_{q+1}^\top \mathbf{S}_{q+1}^\top$, where $\mathbf{A}' \in\mathbb{R}^{N\times R}$, $\mathbf{B}_{q+1}\in \mathbb{R}^{M_{q+1}\times R}$, $\mathbf{S}_{q+1} \in \{0,1\}^{M\times M_{q+1}}$, $\|\mathbf{S}_{q+1}(j,:)\|_0 = 1,\forall j\in [1,M]$, $rank(\mathbf{A}') = R$, and $M_{q+1}>M_{q}\geq R$. If $\mathbf{S}_{q+1}$ is of full-column rank and the rows of $\mathbf{B}_{q+1}$ are sufficiently scattered, then $\mathbf{B}_{q+1}$ has only $M_{q}$ distinct rows, which can be obtained via permutation and scaling of the rows in $\mathbf{B}_q$.
\label{proposition:clusternumber}
\end{proposition}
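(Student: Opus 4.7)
The plan is to reduce the candidate decomposition $\mathbf{X}=\mathbf{A}'\mathbf{B}_{q+1}^\top\mathbf{S}_{q+1}^\top$ to one that satisfies every hypothesis of Theorem~\ref{theorem:CELIdentifiability} by deduplicating the rows of $\mathbf{B}_{q+1}$, and then to invoke the essential uniqueness granted by that theorem to pin the effective number of clusters down to $M_q$.

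First I would enumerate the distinct rows of $\mathbf{B}_{q+1}$. Let $M'$ denote their count and form $\mathbf{B}'\in\mathbb{R}^{M'\times R}$ by keeping one representative of each distinct row; build $\mathbf{S}'\in\{0,1\}^{M\times M'}$ by summing those columns of $\mathbf{S}_{q+1}$ whose corresponding rows in $\mathbf{B}_{q+1}$ are identical. The hard-assignment condition $\|\mathbf{S}_{q+1}(j,:)\|_0=1$ implies that the columns of $\mathbf{S}_{q+1}$ have pairwise disjoint supports, so each row of $\mathbf{S}'$ still carries exactly one $1$, and by construction $\mathbf{S}_{q+1}\mathbf{B}_{q+1}=\mathbf{S}'\mathbf{B}'$, whence $\mathbf{X}=\mathbf{A}'\mathbf{B}'^\top\mathbf{S}'^\top$.

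Next I would verify that $(\mathbf{A}',\mathbf{B}',\mathbf{S}')$ satisfies the hypotheses of Theorem~\ref{theorem:CELIdentifiability}: $\mathbf{B}'$ has no repeated rows by construction; since sufficient scatteredness is a geometric property of the set of rows, it transfers from $\mathbf{B}_{q+1}$ to $\mathbf{B}'$; the columns of $\mathbf{S}'$ have pairwise disjoint, nonempty supports (nonemptiness follows from $\mathbf{S}_{q+1}$ being full column rank), so $\mathbf{S}'$ is itself of full column rank; and $M'\geq R$ because $R=\mathrm{rank}(\mathbf{X})\leq \mathrm{rank}(\mathbf{B}')\leq M'$. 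Applying Theorem~\ref{theorem:CELIdentifiability} to the two decompositions $\mathbf{X}=\mathbf{A}\mathbf{B}_q^\top\mathbf{S}_q^\top$ and $\mathbf{X}=\mathbf{A}'\mathbf{B}'^\top\mathbf{S}'^\top$ then forces $(\mathbf{A}',\mathbf{B}',\mathbf{S}')$ to coincide with $(\mathbf{A},\mathbf{B}_q,\mathbf{S}_q)$ up to a common permutation and positive scaling. In particular, $M'=M_q$ and the rows of $\mathbf{B}'$ are obtained from those of $\mathbf{B}_q$ by permutation and scaling, so $\mathbf{B}_{q+1}$ possesses exactly $M_q$ distinct rows of the required form.

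The main obstacle I anticipate is justifying that sufficient scatteredness is inherited by $\mathbf{B}'$: its formal definition in Appendix~B is a geometric condition on the conic or convex hull of the rows, and one must confirm rigorously that removing duplicate rows leaves this hull unchanged. A secondary subtlety is that the equivalence given by Theorem~\ref{theorem:CELIdentifiability} carries two independent permutations---one permuting the $R$ latent factors (columns of $\mathbf{A}$ and $\mathbf{B}$) and one permuting the $M_q$ clusters (columns of $\mathbf{S}$ and rows of $\mathbf{B}$)---but only the cluster permutation affects the distinct-row count, so the conclusion survives regardless of how the latent-factor ambiguity is resolved.
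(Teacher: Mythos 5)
Your argument is correct, but it is organized differently from the paper's. The paper does not deduplicate: it works directly with the product $\mathbf{S}_{q+1}\mathbf{B}_{q+1}$, observes that its rows are sufficiently scattered because $cone\{\mathbf{B}_{q+1}^\top\mathbf{S}_{q+1}^\top\}=cone\{\mathbf{B}_{q+1}^\top\}$ (using full column rank of $\mathbf{S}_{q+1}$), applies the NMF identifiability result (Lemma~\ref{lemma:Identifiability}) to the two rank-$R$ factorizations $\mathbf{X}=\mathbf{A}(\mathbf{S}_q\mathbf{B}_q)^\top=\mathbf{A}'(\mathbf{S}_{q+1}\mathbf{B}_{q+1})^\top$ to get $\mathbf{S}_{q+1}\mathbf{B}_{q+1}=\mathbf{S}_q\mathbf{B}_q\mathbf{\Pi}\mathbf{D}$, and then simply counts distinct rows: $\mathbf{S}_q\mathbf{B}_q$ has exactly $M_q$ distinct rows, column permutation/scaling preserves row distinctness, and full column rank of $\mathbf{S}_{q+1}$ guarantees every row of $\mathbf{B}_{q+1}$ appears in the product. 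You instead merge the clusters whose rows of $\mathbf{B}_{q+1}$ coincide, check that the reduced triple $(\mathbf{A}',\mathbf{B}',\mathbf{S}')$ satisfies every hypothesis of Theorem~\ref{theorem:CELIdentifiability} (all your verifications—hull invariance under deduplication, disjoint nonempty supports giving full column rank, $M'\geq R$ via $R=\mathrm{rank}(\mathbf{X})\leq\mathrm{rank}(\mathbf{B}')\leq M'$—are sound), and then cite Theorem~\ref{theorem:CELIdentifiability} as a black box. Your route is more modular and makes explicit that the deduplicated decomposition is itself a legitimate CEL solution; the paper's route is shorter and sidesteps the one delicate point in yours, namely the final step where you conclude $M'=M_q$ from "essential uniqueness." As stated, Theorem~\ref{theorem:CELIdentifiability}'s uniqueness notion is defined by comparing factors of matching shapes, so a rival decomposition with an a priori different cluster count is not literally covered by the statement; to make your last step airtight you should unpack it exactly as the paper's proof of the theorem does—Lemma~\ref{lemma:Identifiability} gives $\mathbf{S}'\mathbf{B}'=\mathbf{S}_q\mathbf{B}_q\mathbf{\Pi}\mathbf{D}$, and since for any structured factorization with no repeated rows in the cluster-embedding factor the cluster count equals the number of distinct rows of this common product, $M'=M_q$ follows—which is precisely the counting argument the paper performs directly on $\mathbf{S}_{q+1}\mathbf{B}_{q+1}$. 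With that step spelled out, the two proofs rest on the same engine (NMF identifiability of the product plus a distinct-row count), and your conclusion and its handling of the two permutation ambiguities are correct.
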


The proof is built on the fact that the identifiability of CEL implies $\mathbf{S}_{q+1}\mathbf{B}_{q+1}$ is a permutation and scaling of $\mathbf{S}_{q}\mathbf{B}_{q}$. The implication of Proposition \ref{proposition:clusternumber} is twofold: 1) there exists a unique optimal number of clusters\footnote{A formal proof is provided in Appendix~B.}; 2) once the optimal number of clusters is reached, further split is unnecessary and will only introduce duplicated cluster embeddings.

Based on this observation and given that we do not have the exact formulation of the optimal number, we will keep splitting, while always initializing the embedding of the newly split cluster to be the same as the chosen cluster (this will retain the optimality of the previous solution). Besides, in online learning settings where we continually receive streaming data, the optimal number of clusters may keep increasing. In this sense, cluster split is well-motivated and makes CEL a sustainable learning framework.

\subsection{CEL-Lite for Efficient Online Update}
\label{sec:online}

\subsubsection{Online Learning.}

Modern recommender systems generally deal with a massive amount of data. 
Such systems are thus usually updated in an online fashion: data arrive in sequence over time, and updates take place when the accumulated data reaches a predefined quantity. This framework closely follows the online learning paradigm and allows the model to rapidly adapt to the newest data. 

We adapt CEL into CEL-Lite for the online learning setting by optimizing its efficiency in dealing with batch data. CEL-Lite has a theoretically optimal time complexity and inherits most of the advantages of CEL. CEL-Lite is also suitable when we have access to the full data but want to train the model in an online fashion. 

\subsubsection{Data and Embedding Optimization.}

Under the online setting, each new batch $\mathbf{X}_b$ consists of a set of user-item interactions $\{\mathbf{X}(i,j)\}$. To enable the usage of interaction history as well as mitigate \emph{catastrophic forgetting} \cite{forgetting}, we buffer from online batches a small number of the latest interactions ($\leq\!n$) from each user and item for \emph{experience replay} \cite{replay}, which is commonly used in online systems. Since each interaction is represented by a triplet of $\langle$user Id, item Id, rating$\rangle$, the storage cost incurred by buffering is acceptable. Moreover, the interactions can be stored in distributed file systems, unlike the embeddings or model parameters which must be stored in memory for efficient updates.

In CEL-Lite, one step of embedding optimization is performed for each batch via stochastic gradient descent with related buffered interactions. The overall time complexity of embedding optimization is then at most $\mathcal{O}(nDR)$, where $D$ denotes the total number of interactions $\|\mathbf{W}\|_0$.

\subsubsection{Efficient Cluster Reassignment.}
 
To achieve fast cluster reassignment, we incorporate the following accelerations into CEL-Lite: 1) only considers $m$ additional randomly sampled clusters for each reassignment; 2) only items involved in the current batch are considered for reassignment.

With these two strategies, the cluster assignments are only improved locally. Nevertheless, with sufficient updates, it may be able to converge to the global optimal. Accordingly to our experiments, such accelerations significantly improve efficiency, while only incurring a small drop in performance. The time complexity of each reassignment is now only $\mathcal{O}(nmbR)$, where $b$ is the batch size. Given there are $D/b$ batches, there are at most $D/b$ cluster reassignments. Thus, the overall time complexity is $\mathcal{O}(nmDR)$. 

\subsubsection{Efficient Cluster Split.}

To control the overall complexity of cluster split for CEL-Lite, we propose to constrain the number of associated interactions in each cluster and ensure clusters are approximately balanced. These two conditions limit the maximal number of split operations and the time complexity of each split. Concretely, we set up the following split strategies: 
\begin{strategy} 
If an item has more than $d$ interactions, it will directly split and form a cluster by itself.
\label{strategy:1}
\end{strategy}
\begin{strategy}
A cluster will split if and only if it has more than $2d$ associated interactions ($\|\mathbf{W}\mathbf{S}_q(:,k)\|_0$). The split should be balanced such that the difference between the number of associated interactions of resulting clusters is within $d$. 
\label{strategy:2}
\end{strategy}

Strategy~\ref{strategy:1} handles extremely warm items. After executing strategy~\ref{strategy:1}, each clustered item has no more than $d$ interactions; strategy~\ref{strategy:2} then gets to easily balance the splits, which can be done by tuning\footnote{The tuning of $\delta$ can be done by \emph{Quickselect} in $\mathcal{O}(\|\mathbf{S}_q(:,k)\|_0)$.} the threshold $\delta$ in Eq.~\eqref{equ:pca score}.

With strategy \ref{strategy:1} and \ref{strategy:2}, all clusters have at least $d/2$ interactions. Therefore, there are at most $2D/d$ clusters and hence at most $2D/d$ GPCA splits in total. Calculating the gradient of each item with only their $n$ buffered interactions, the time complexity of a single GPCA split is approximately $\mathcal{O}(n\|\mathbf{S}_q(:,k)\|_0R)$, which is no more than $\mathcal{O}(2ndR)$. The overall time complexity of splits is thus $\mathcal{O}(4nDR)$.

Note that we use a constant $d$ in strategies \ref{strategy:1} and \ref{strategy:2} for simplicity and ease of use in practice. It can in fact generalize to any monotonically nondecreasing sublinear functions of $D$ without altering the overall time complexity. 
These strategies allow the maximum number of clusters to grow according to $D$, which practices our theoretical results in Section \ref{sec:identifiability}.

\subsubsection{The Optimal Time Complexity}
\label{sec:time complexity}

Regarding $n,m$ as constants, each of the three optimizations of CEL-Lite is of a time complexity of $\mathcal{O}(DR)$. Thus, the overall complexity of CEL-Lite is also $\mathcal{O}(DR)$. 
Note that once we need to process all the data and each invokes an update of the embedding, the time complexity is already $\mathcal{O}(DR)$. That is, the time complexity of $\mathcal{O}(DR)$ in a sense is the lower bound of embedding learning.

\section{Experiments}

\begin{table}[t]
\centering
\caption{Datasets summary.}
\resizebox{.42\textwidth}{!}{
\begin{tabular}{lrrr}
\hline
Dataset        & Users & Items & No. interactions \\ \hline

MovieLens-1m   & $6,040$  & $3,952$  & $1,000,209$ \\

Video Games  & $826,767$ & $50,210$ & $1,324,753$ \\
Electronics  & $4,201,696$ & $476,002$ & $7,824,482$ \\
One-day Sales  & $\approx 13$m & $\approx40$m & $\approx700$m \\\hline
\end{tabular}}
\label{tab:datasets}
\end{table}

In this section, we empirically analyze CEL and compare it with related embedding learning baselines, including: Vanilla full embedding; eTree \cite{eTREE}; JNKM \cite{jnkm}; Modulo \cite{modulo} (i.e., with plain hash); BH \cite{binaryhash}; AE \cite{DeepRec}; HashGNN \cite{hashgnn}; and PreHash \cite{prehash}. 
They are tested with a variety of feature interaction models, including: NMF, DeepFM~\cite{guo2017deepfm}, DIN~\cite{din}, and a business model from an E-commerce platform. 

The datasets considered include: MovieLens, a review dataset with user ratings ($1\!\sim\!5$) on movies; Video Games and Electronics, review datasets with user ratings ($1\!\sim\!5$) of products from Amazon; and One-day Sales, sub-sampled real service interaction data within a day of  the E-commerce platform Shopee. The key statistics of these datasets are given in Table~\ref{tab:datasets}.


The original eTree and JNKM do not share embedding within clusters. To make the comparison thorough, we adapt eTree such that in the test phase, leaf nodes (users/items) will share the embedding of their parent nodes. Similarly, we adapt JNKM such that clustered items share the embedding of their cluster in the test phase. PreHash has a hyperparameter $K$ that selects top-$K$ related clusters for each user or item. We report its results with different values of $K$. 

We reuse $M_q$ to also denote: the number of parent nodes in eTree; the number of clusters in JNKM; and the number of hash buckets in hash-based methods. We refer to $M_q/M$ as the \textbf{compression ratio}.

\begin{table*}[htb]
\centering
\caption{The test Mean Square Error (MSE) of rating predictions on MovieLens-1m dataset (averaged over 5-fold cross validation). A compression ratio (defined as $M_q/M$) of $100\%$ means the result is obtained under full embedding, while other compression ratios indicate that the results are obtained with clustered/hashed/shared embedding. Results marked with (p) indicate the results obtained with personalization (Section~\ref{sec:personalization}).}
\resizebox{0.81\textwidth}{!}{
\begin{tabular}{c|ccc|cccc|cccc}
\hline
\multicolumn{4}{c|}{
Feature Interaction Model
} & \multicolumn{4}{c|}{NMF} & \multicolumn{4}{c}{DIN} \\ \hline
\multicolumn{4}{c|}{Compression Ratio}       
             &$100\%$ &$5\%$ &$1\%$  &$0.5\%$ &$100\%$ &$5\%$ &$1\%$   &$0.5\%$       \\ \hline
 & LTC & LMC & SIT & & & & & & & &
\\ \hline
Vanilla full embedding & \cmark & \xmark & \cmark & $0.8357$ & - & -  & -  
                  & $0.7903$ & - & -  & - \\ 
eTree & \xmark & \xmark & \xmark        & $0.7841$ & $0.8399$ & $0.8482$ & $0.8614$ 
             & $0.7640$ & $0.8306$ & $0.8379$ & $0.8470$ \\
JNKM& \xmark & \xmark & \xmark        & $0.7626$ & $0.8903$ & $0.8616$ & $0.8498$ 
             & $0.7486$ & $0.8868$ & $0.8560$ & $0.8466$ \\ 
Modulo & \cmark & \cmark & \cmark    & - & $1.0227$ & $1.0688$  & $1.0732$  
             & - & $1.0206$ & $1.0647$ & $1.0692$ \\
BH     & \cmark & \cmark & \cmark     & - & $0.9753$ & $1.0170$  & $1.0526$  
             & - & $0.9907$ & $1.0040$ & $1.0495$ \\
AE    & \cmark & \cmark & \cmark       & - & $0.9959$ & $1.0343$  &  $1.0630$ 
             & - & $0.9895$ & $1.0280$ & $1.0624$ \\  
HashGNN   & \xmark & \xmark & \xmark   & - & $0.8491$ & $0.8829$ & $0.8882$ 
             & - & $0.8467$ & $0.8663$ & $0.8723$ \\ 
PreHash ($K=4$)   & \cmark & \cmark & \xmark & - & $0.8169$ & $0.8832$ & $0.8987$ 
                    & - & $0.8057$ & $0.8678$ & $0.8897$ \\ 
PreHash ($K=M_q$) & \xmark & \cmark & \xmark & - & $0.7958$ & $0.8047$ & $0.8101$ 
                    & - & $0.7871$ & $0.7893$ & $0.7942$ \\ 
CEL (Ours)  & \xmark & \cmark & \cmark     & $\textbf{\small{\textsf{0.7507}}}^p$ & $\textbf{\small{\textsf{0.7784}}}$ & $\textbf{\small{\textsf{0.7858}}}$ & $\textbf{\small{\textsf{0.7926}}}$ & $\textbf{\small{\textsf{0.7390}}}^p$ & $\textbf{\small{\textsf{0.7718}}}$ & $\textbf{\small{\textsf{0.7787}}}$ & $\textbf{\small{\textsf{0.7868}}}$ \\
CEL-Lite (Ours) & \cmark & \cmark & \cmark  & $\mathbf{0.7519}^p$ & $\mathbf{0.7820}$ & $\mathbf{0.7901}$ & $\mathbf{0.8014}$ & $\mathbf{0.7421}^p$ & $\mathbf{0.7766}$ & $\mathbf{0.7789}$ & $\mathbf{0.7906}$ \\ \hline
\end{tabular}}
\label{tab:movielens ranting prediction}
\end{table*}

\subsection{Performance Versus Baselines}
\subsubsection{Rating Prediction}

In this experiment, we perform rating prediction on the MovieLens datasets. We adopt NMF as well as DIN (both with $R=64$) as our feature interaction models, respectively. We set $t_1=40$ and $t_2=10$ for CEL, while $b=2000$, $n=20$, $m=10$, $t_1=1$ for CEL-Lite. 
Since some baselines apply their methods only on items or users, to ensure a fair comparison, for this experiment we only compress (clustering, hashing, etc.) the item embeddings while keeping full user embeddings for all methods.

The results are shown in Table~\ref{tab:movielens ranting prediction}.
It can be observed that CEL outperforms existing baselines under all settings. According to our results, training with predefine clustering structure, eTree and JNKM brings clear improvements over vanilla methods in the full embedding setting. However, they consistently underperform CEL, especially when using a compressed embedding table. For hash based methods, the improvements brought by BH and AE (over naive Modulo) are relatively limited, while HashGNN and PreHash achieve more clear performance gains. Among them, PreHash with $K=M_q$ achieves the best performance in the compressed settings. Nevertheless, CEL clearly outperforms PreHash. 
Meanwhile, the results of this experiment demonstrate that CEL-Lite, being much more computationally efficient than CEL, has a similar performance to CEL and still clearly outperforms other baselines. The degradation in performance might be attributed to its subsampling accelerations. 

In addition, we check in Table~\ref{tab:movielens ranting prediction} three important properties for each method: Low Time Consumption (\textbf{LTC}), which indicates whether it achieves a time complexity of $\mathcal{O}(DR)$ for both training and test. In particular, algorithms whose time complexity scales linearly with $M_q$ is not regarded as being LTC. Thus, eTree, JNKM, PreHash (with $K=M_q$), and CEL is not regarded as LTC. Neither does HashGNN, as it requires running GNN aggregation of neighborhoods, which has a worst case time complexity exponential in $D$; Low Memory Consumption (\textbf{LMC}), which indicates whether the method can significantly reduce the memory cost for both training and testing compared with full embedding. JNKM, eTree, and HashGNN require a full embedding table during training, thus do not satisfy LMC; Support Incremental Training (\textbf{SIT}), which indicates whether the method supports incremental training, is crucial for practical systems. As the data increases, the predefined tree/clustering structure of eTree/JNKM can be sub-optimal, the pretrained hash function of HashGNN may become outdated, and the pre-specified pivot users in PreHash may become less representative. As a result, these methods may require periodical retraining.

\begin{table*}[ht]
\centering
\caption{The test AUC (presented in \% for a better interpretation) on Video Games and Electronics datasets.}

\resizebox{0.99\textwidth}{!}{
\begin{tabular}{c|cccc|cccc|cccc|cccc}
\hline
Dataset      & \multicolumn{8}{c|}{Video Games}  & \multicolumn{8}{c}{Electronics} \\ \hline
Feature Interaction      & \multicolumn{4}{c|}{DeepFM}  & \multicolumn{4}{c|}{DIN} & \multicolumn{4}{c|}{DeepFM}  & \multicolumn{4}{c}{DIN} \\ \hline
Compression Ratio     & $100\%$    & $10\%$    & $5\%$    & $2\%$   & $100\%$    & $10\%$    & $5\%$    & $2\%$   & $100\%$    & $10\%$    & $5\%$    & $2\%$   & $100\%$    & $10\%$    & $5\%$& $2\%$   \\ \hline
Vanilla full embedding      & $68.19$ & - & - & - & $70.73$ & - & - & - 
            & $66.95$ & - & - & - & $67.62$ & - & - & -  \\
Modulo      & - & $61.68$ & $58.81$ & $56.34$ 
            & - & $63.35$ & $61.94$ & $60.25$ 
            & - & $63.13$ & $54.54$ & $48.61$ 
            & - & $63.70$ & $61.69$ & $58.93$ \\
HashGNN     & - & $68.51$ & $64.43$ & $61.84$ 
            & - & $71.59$ & $71.11$ & $70.82$ 
            & - & $64.49$ & $61.41$ & $59.45$ 
            & - & $65.50$ & $63.49$ & $60.40$ \\
PreHash ($K=M_q$)       & - & $69.19$ & $66.21$ & $63.89$ 
                        & - & $72.72$ & $71.63$ & $71.46$ 
                        & - & $67.50$ & $67.18$ & $66.99$ 
                        & - & $70.30$ & $68.80$ & $68.77$ \\
CEL-Lite    & $\mathbf{70.85}^p$ & $\mathbf{70.33}$ & $\mathbf{70.21}$ & $\mathbf{69.82}$ 
            & $\mathbf{73.85}^p$ & $\mathbf{73.80}$ & $\mathbf{73.50}$ & $\mathbf{73.39}$ 
            & $\mathbf{68.98}^p$ & $\mathbf{68.90}$ & $\mathbf{68.69}$ & $\mathbf{68.03}$ 
            & $\mathbf{71.63}^p$ & $\mathbf{71.56}$ & $\mathbf{70.18}$ & $\mathbf{69.81}$ \\ \hline
\end{tabular}}
\label{tab:cr predictions}
\end{table*}
\subsubsection{Online Conversion Prediction}

\label{exp:Conversion Rate}

To promote desirable behaviors from users, such as clicks and purchases, conversion rate prediction is an important learning task in online E-commerce systems. In this section, we test the models with the conversion prediction task, i.e., predicting whether a user will perform a desirable behavior on an item. We use the Video Games and Electronics datasets for this experiment, treating ratings not less than $4$ as positive feedback (i.e., conversions). For performance evaluation, we use the standard Area under the ROC Curve (AUC) on the test sets, which can be interpreted as the probability that a random positive example gets a score higher than a random negative example. 

The experiment is conducted in an online fashion, such that the data is received incrementally over time. We adopt HashGNN and PreHash, the top-2 performing hash methods in the rating prediction experiment (as well as the naive Modulo) as our baselines, though they do not support incremental training. The hash function of HashGNN requires pretraining with full data before starting the actual embedding learning, thus we allow it to access the full data before the online training. PreHash requires binding the warmest users to its buckets, we thus allow it to go through all data to find the warmest users before the online training. It is worth noting that, unlike these two methods, CEL(-Lite) can naturally start embedding learning from scratch without any extra information. 

In this experiment, we compress both the user embeddings and the item embeddings. To further demonstrate that our method works well with different feature interaction models, we adopt DeepFM in this experiment, while keeping using DIN. We set $R=8$ and $b=256$. We set $n=20$, $m=50$, and $t_1=1$ for CEL-Lite. 

The results are shown in Table~\ref{tab:cr predictions}. It can be observed that CEL-Lite still clearly outperforms the baselines.

\begin{table}[ht]
\centering
\caption{The test AUC  (\%) on One-day Sales dataset.}
\resizebox{.35\textwidth}{!}{
\begin{tabular}{c|cccc}
\hline
Dataset                & \multicolumn{4}{c}{One-day Sales} \\ \hline
\begin{tabular}[c]{@{}c@{}}Embedding size \\  
\end{tabular} & Full          & 1m       & 100k        & 10k         \\

\hline
Business Model                 & $87.09$      & $86.80$     & $86.55$          & $86.46$          \\
CEL-Lite                    & -           & -          & $\mathbf{87.78}$     & $\mathbf{87.70}$     \\ \hline
\end{tabular}}
\label{tab:one day sales}
\end{table}

\subsubsection{Integrated with Business Model}

To further validate the performance of CEL-Lite, we conduct experiments on dataset from the E-commerce platform Shopee, that needs to handle billions of requests per day. We use the model deployed in that platform as our baseline, which is a fine-tuned business model that has a sophisticated network structure and heavy feature engineering. 
It optionally uses full embedding, or uses Modulo to compress the size of embedding table. In this experiments, we integrate CEL-Lite into this business model, replacing its embedding learning module. 

As shown in Table~\ref{tab:one day sales}, CEL-Lite with a small number of clusters (with 10k or 100k embedding entities for users and items, respectively) outperforms the business model with full embedding, and more significantly, the business model with compression. 
Specifically, CEL-Lite achieves an improvement of $+0.61\%$ in AUC with the embedding table size being 2650 times smaller. 
Note that a $+0.1\%$ gain in absolute AUC is regarded as significant for the conversion rate tasks in practice \cite{cheng2016wide,din}. 

\begin{figure}[h]
    \centering
    \includegraphics[width=.45\textwidth]{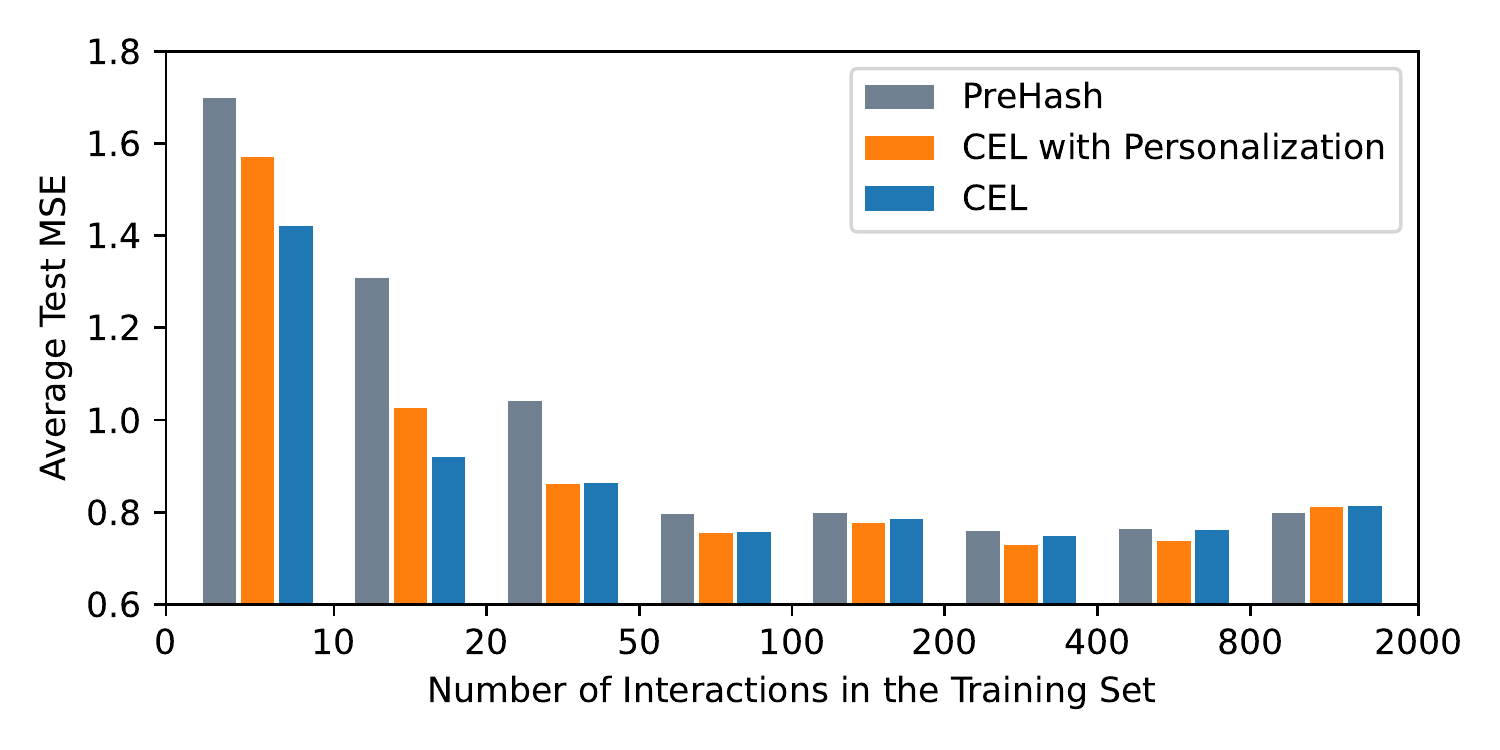}
    \caption{Averaged test MSE of items with different number of interactions. PreHash (with $K=M_q$) is the best-performing baseline in Table~\ref{tab:movielens ranting prediction}. CEL significantly outperforms PreHash on cold items. Personalization degenerates the performance of cold items, while improves the performance of warm items.}
    \label{fig:loss vs frequency}
\end{figure}

\begin{figure*}[ht]
    \centering
    \hspace{-1mm}
    \includegraphics[width=57mm]{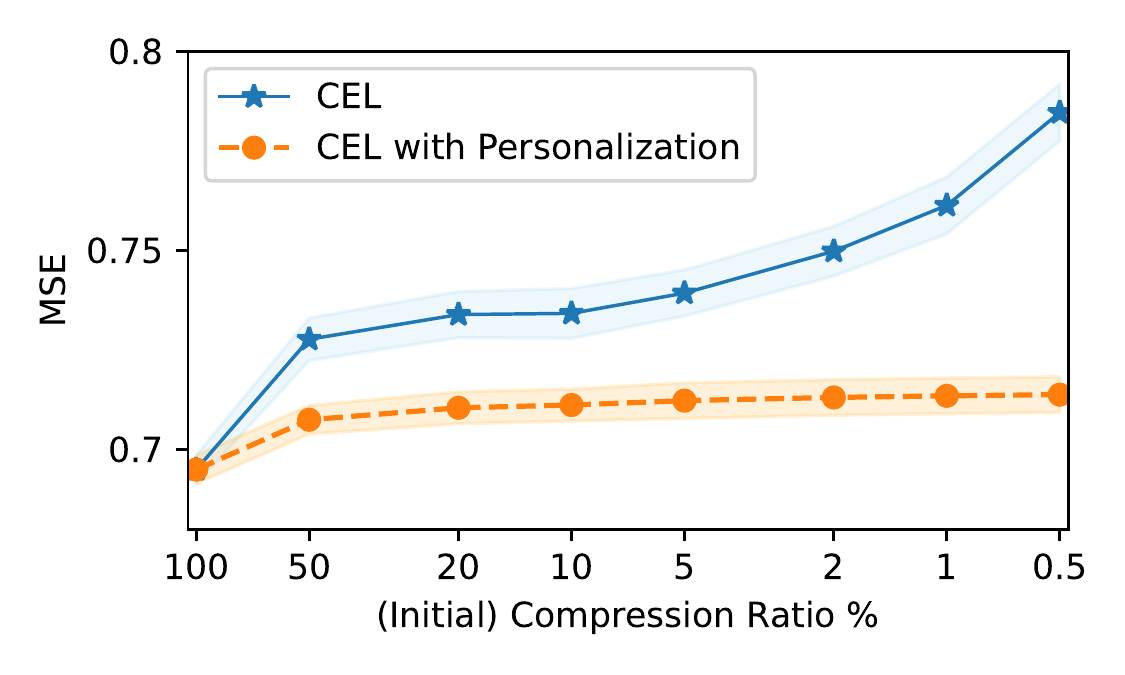}
    \includegraphics[width=56mm]{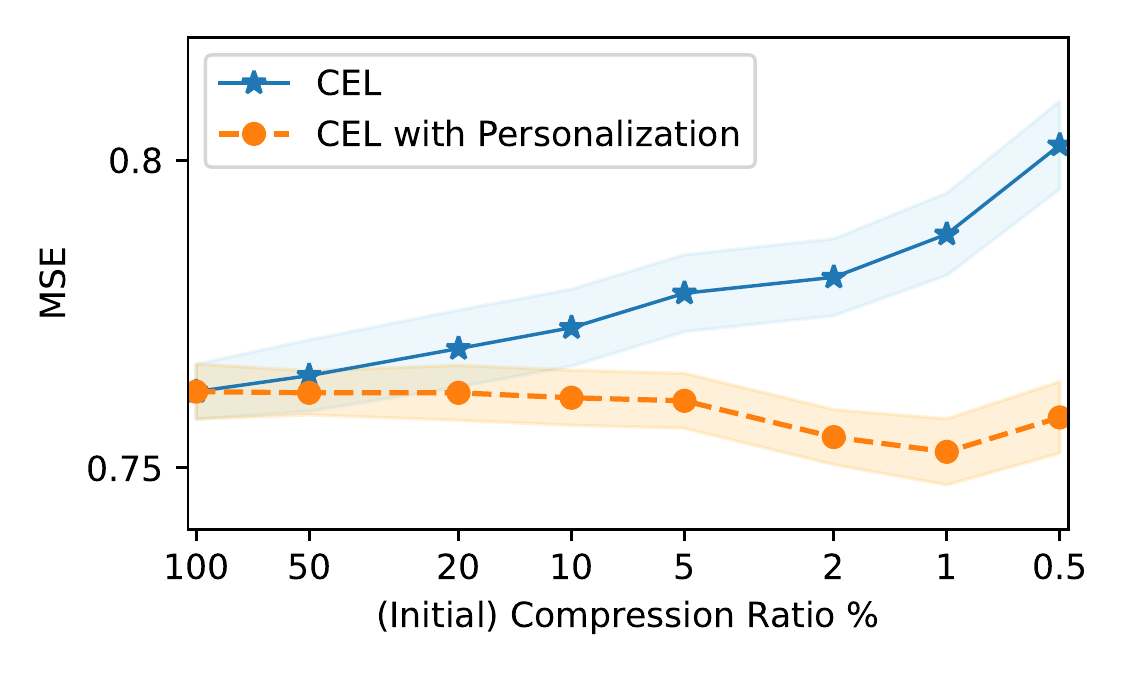}
    \includegraphics[width=57mm]{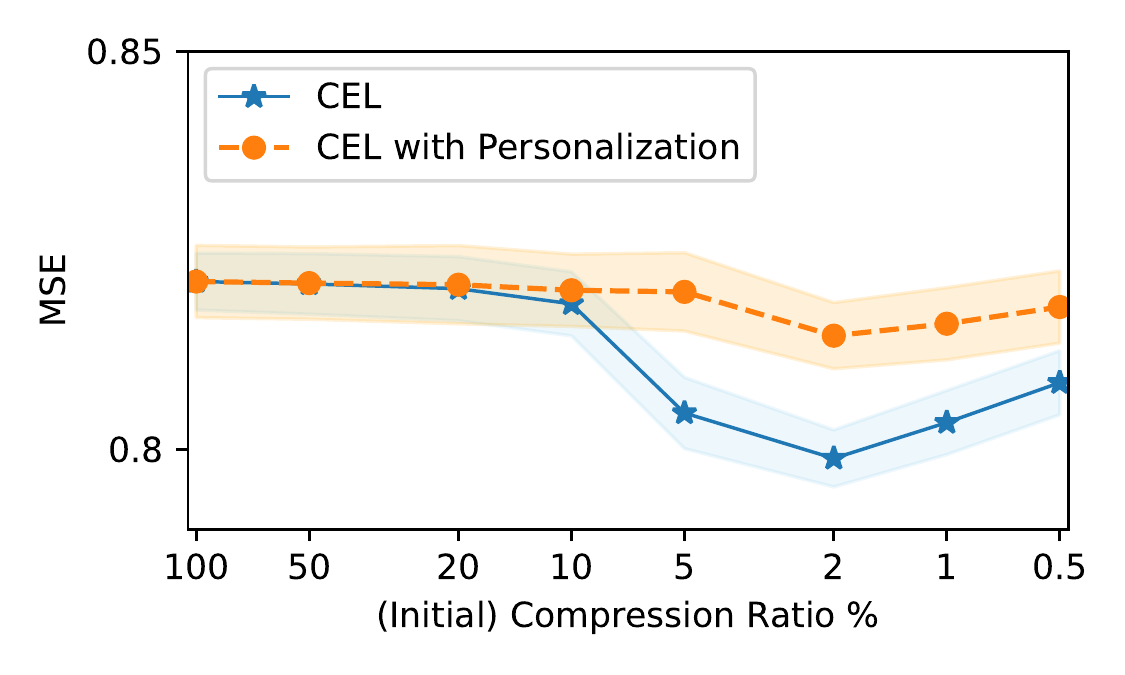}
    \hspace{6mm}(a)\hspace{52mm} (b)\hspace{53mm} (c)
    \caption{The test MSE of different (initial) compression ratios on (a) the \emph{dense} MovieLens-1m, (b) the \emph{original} MovieLens-1m, and (c) the \emph{sparse} MovieLens-1m. Standard errors over $5$ runs are indicated by the shades. CEL with personalization at a given compression ratio indicates the fully personalized result, initiated with CEL at that compression ratio. $100\%$ means gradually split to full embedding.}
    \label{fig:sparsity vs ratio}
\end{figure*}

\subsection{Personalization and Cold-start Problems}
\subsubsection{Personalization.} \label{sec:personalization}

When training with shared embeddings, we may choose to untie the cluster assignments at a specific point to learn a personalized embedding (with the shared embedding as initialization) for each user and item. After being untied, embeddings can be further optimized according to their own historical data. This makes sense if we want a full embedding, in case we have enough data and memory. We refer to this procedure as \emph{personalization}, which is similar to the personalization in federated learning settings \cite{tan2022towards}. There are various personalization techniques. We leave them for future investigations. Here, we simply add the following distance regularization to the original objective to guide the personalization: 
\[\mathcal{L}_{p}=\frac{\lambda_p}{2}\sum_{j=1}^M\|\mathbf{B}(j,:)-\mathbf{S}_q(j,:)\mathbf{B}_q\|_2^2.\]
We set $\lambda_p=50$ in our experiment.
Table~\ref{tab:movielens ranting prediction}\&\ref{tab:cr predictions} shows CEL with personalization achieves the best results. In particular, it has outperformed all other full embedding methods.

\subsubsection{Mitigating Cold-start Problems.}

We investigate the test loss of items with respect to the number of associated interactions in the training set (on the MovieLens-1m dataset, with a compression ratio of $1\%$). As shown in Figure~\ref{fig:loss vs frequency}, CEL significantly outperforms PreHash on cold items. This could be because PreHash emphasizes the warm items, while the cold items are roughly represented as a weighted sum of the warm ones. This may also be why PreHash performs slightly better in the interval of $[800,2000]$. On the other hand, we can see that if we do personalization for CEL, the performance of cold items degenerates. This strongly evidences those cold items cannot be well optimized with only their own data, and CEL can help in this regard. Personalization brings benefits for the intervals of $>\!20$, hence can have a better overall performance. We further try only personalizing the warm items ($>\!20$), and find a $1.2\%$ improvement on test MSE. 

\subsubsection{Sparsity and Compression Ratio.}

To investigate the impact of data sparsity and compression ratio on CEL, we create a dense MovieLens-1m dataset by filtering out the users and items with no more than $100$ interactions, and a sparse MovieLens-1m dataset by randomly dropping $50\%$ interactions (but still ensuring at least $1$ sample) for each user. 

As shown in Figure~\ref{fig:sparsity vs ratio}, on the dense and original datasets, increasing the number of clusters consistently improves the overall performance of CEL; while on the sparse dataset, the optimal compression ratio occurs at $2\%$: further splits will degenerate its performance. 

We can also see that personalization at different compression ratios consistently brings improvement over CEL on the dense and original datasets, which is reasonable as warm items may have dominated the overall performance in these datasets. By contrast, on the sparse dataset, personalization consistently leads to degeneration of performance. 
More specifically, on the dense dataset, increasing the number of clusters consistently improves the performance of personalization, while on the original and sparse dataset, there is an optimal compression ratio of $1\%\sim2\%$ for personalization. 

Overall, the number of clusters is not the more the better. Too many clusters on a sparse dataset or a dataset that has many cold items, may lead to performance degeneration: partially on cold items or even the overall performance.

\subsection{Ablation Studies on Clustering}
\subsubsection{Interpretation of Clustering.}

To see how and how well CEL clusters items, we visualize the learned clusters of CEL on MovieLens-1m at $M_q=10$ in Figure~\ref{fig:cel_interpretation}. 

We find that at this stage, the clusters are mainly divided according to the ratings of items. The standard deviations of ratings within clusters are generally low. This result is reasonable because the task is to predict the ratings. 

We can also see that movies within a cluster appear to have some inner correlation. For example, cluster $6$ mainly consists of action or science fiction action movies; cluster $5$ mainly consists of music or animation movies that are suitable for children; cluster $9$ mainly consists of crime movies that have gangster elements; and cluster $8$ contains $70\%$ movies directed by Stanley Kubrick. Note that such information is not fed to the model in any form, but learned by CEL automatically. 
A calculation of the averaged entropy of the distribution of genres in each cluster further evidences that CEL can better distinguish movies of different genres than others. 

Interestingly, we find movies of the same type may form multiple clusters of different average ratings. For example, clusters $4$ and $6$ are both mostly action, but with a clear difference in their ratings.

\begin{table}[htb]
\centering
\caption{The test MSE of rating predictions on MovieLens-1m. We compare CEL with rule-based clusterings, and test using rule-based clusterings as initialization for CEL.}
\scalebox{0.92}{
\begin{tabular}{cc|ccc}
\hline
\multicolumn{2}{c|}{Number of clusters ($M_q$)} & $18$ & $40$ & $200$      \\ \hline
Initialization & Continue Running & && \\ \hline
By Genre &  - & $1.0515$ & - & - \\
By Avr. Rating &  -  & $0.8397$ & $0.8390$ & $0.8380$ \\
By Genre       & CEL & $0.8066$ & $0.7859$ & $\mathbf{0.7776}$ \\
By Avr. Rating & CEL & $\mathbf{0.7959}$ & $0.7892$ & $0.7833$ \\
None           & CEL & ${0.7985}$ & $\mathbf{0.7858}$ & $0.7784$ \\ \hline 
\end{tabular}}
\label{tab:by_rule}
\end{table}

\subsubsection{Rule-based Clusterings}

CEL achieves automatic clustering of entities, but it is still not clear how it works against rule-based clusterings. Given that we find in the initial stage CEL mainly clusters items according to the genres and ratings of movies on the MovieLens dataset. In this section, we compare CEL with rule-based clustering based on genres (partitioning movies according to their genres in the meta data; there are 18 genres in total) and ratings (partitioning movies evenly according to their averaged rating). 

As shown in Table~\ref{tab:by_rule}, neither of these two rule-based clusterings works well.
We further try using these two rule-based clusterings as initialization for CEL, to see whether it can help improve the performance. As we can see, such initialization brings either no or insignificant improvements.

\begin{figure*}[ht]
    \centering
    \includegraphics[width=1\linewidth]{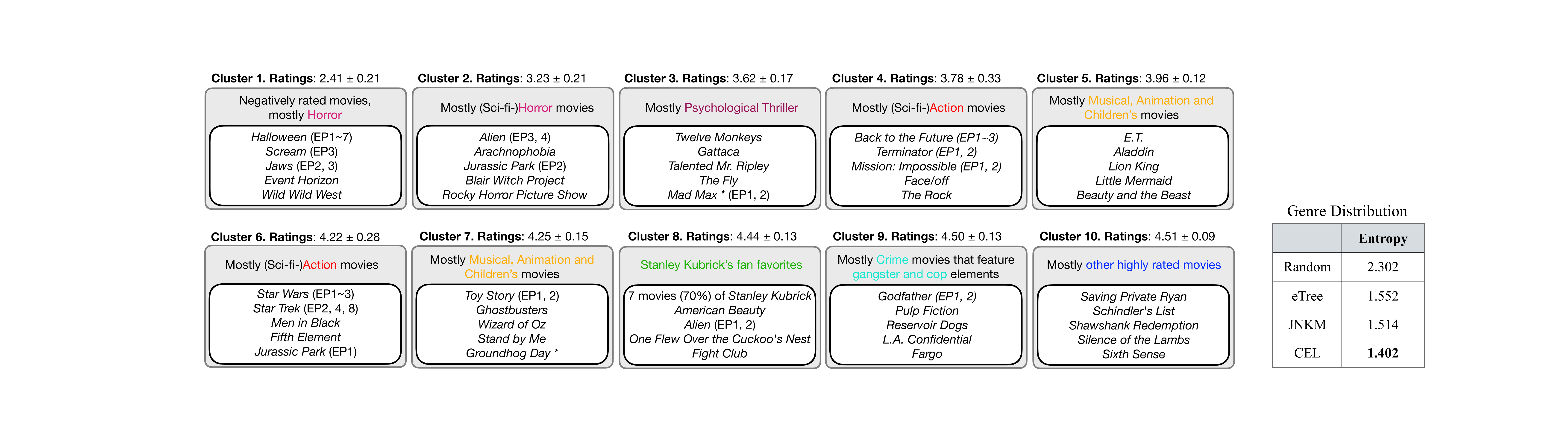}
    \caption{Clusters learned by CEL, ordered by their averaged ratings. We provide a one-line description for each cluster by summarizing the movies contained inside, and list 5 movies with the most number of ratings, with outliers marked by *. The table on the right shows the averaged entropy of the genre distribution in the clusters, calculated with the meta-data available.}
    \label{fig:cel_interpretation}
\end{figure*}

\subsubsection{Split Methods} 

\begin{table}[htb]
\centering
\caption{Test MSE of different split methods.}
\scalebox{0.92}{
\begin{tabular}{ccc}
\hline
MovieLens         & 100k & 1m \\
\hline
Random Split      &  $0.9392$ & $0.8516$   \\
User-Inspired Bisecting 2-Means &  $0.9245$ & $0.8355$   \\
Gradient Bisecting 2-Means &  $0.8847$ & $0.8186$   \\
Gradient Random Projection &  $0.9201$  & $0.8269$   \\
GPCA              &  $\mathbf{0.8707}$ & $\mathbf{0.7858}$\\
\hline
\end{tabular}}
\label{tab:split methods}
\end{table}

We test several different split methods, including: Random Split, which randomly splits items into two sets; Bisecting 2-Means, an advanced split method from DHC \cite{steinbach2000comparison}, which bisects items into 2 clusters according to the distance between item representations; Random Projection, which is similar to GPCA but projects the gradients in a random direction instead of the first principal component; and GPCA. 
Bisecting 2-Means is performed either on the gradient matrix or on the user-inspired representations (i.e., representing an item by averaging the embeddings of its associated users). For all methods, we split till a compression ratio of $1\%$. 

The results are shown in Table~\ref{tab:split methods}. We can see that random split yields the worst result, which indicates the importance of split methods; Bisecting 2-Means works better on gradients than user-inspired representations, which suggests the gradients are more informative; Bisecting 2-Means and random project of gradient works less than GPCA, which suggests the rationality of split according to the first principal component. 

\section{Conclusions}

In this paper, we proposed Clustered Embedding Learning (CEL), a novel embedding learning framework that can be plugged into any differentiable feature interaction model. It is capable of achieving improved performance with reduced memory costs. We have shown the theoretical identifiability of CEL and the optimal time complexity of CEL-Lite. Our extensive experiments suggested that CEL can be a powerful component for practical recommender systems. 

We have mainly considered the embeddings of users and items in this paper. Nevertheless, CEL can be similarly applied to other ID features or categorical features, such as shop ID or seller ID in E-commerce, search words in search engines, object categories, and geographic locations. We leave these for future work. 

\section*{Acknowledgments}
This research is supported in part by the National Natural Science Foundation of China (U22B2020); the National Research Foundation Singapore and DSO National Laboratories under the AI Singapore Programme (AISG2-RP-2020-019); the RIE 2020 Advanced Manufacturing and Engineering Programmatic Fund (A20G8b0102), Singapore; the Nanyang Assistant Professorship; the Shanghai University of Finance and Economics Research Fund (2021110066). 

\bibliographystyle{ACM-Reference-Format}
\bibliography{sample-base}

\appendix

\begin{table*}[htb]
\centering
\caption{The test MSE of rating predictions on MovieLens datasets with NMF as the feature interaction model.} 
\resizebox{0.90\textwidth}{!}{
\begin{tabular}{c|cccc|cccc|cccc}
\hline
Datasets & \multicolumn{4}{c|}{MovieLens-100k} &\multicolumn{4}{c|}{MovieLens-1m} & \multicolumn{4}{c}{MovieLens-10m} \\ \hline
Compression Ratio        
             &$100\%$ &$5\%$ &$1\%$   &$0.5\%$ &$100\%$ &$5\%$ &$1\%$   &$0.5\%$ &$100\%$ &$5\%$ &$1\%$   &$0.5\%$       \\ \hline

Vanilla full embedding       & $0.9283$ & - & -  & - & $0.8357$ &  - & -& -  & $0.7227$ & - & - & - \\
eTree        & $0.9127$ & $0.9830$ & $0.9973$ & $0.9873$  & $0.7841$ & $0.8399$ & $0.8482$ & $0.8614$ & $0.7000$ & $0.7701$ & $0.7830$ & $0.7902$ \\
JNKM         & $0.8841$ & $1.0501$ & $1.0463$  & $1.1054$ & $0.7626$ & $0.8903$ & $0.8616$ & $0.8498$  & $0.6941$ & $0.7766$ & $0.7923$ & $0.7962$ \\ 
Modulo       & - & $1.0908$ & $1.1045$  & $1.1097$  & - & $1.0227$ & $1.0688$  & $1.0732$  & - & $0.8402$ & $0.8532$ & $0.8658$ \\
BH           & -      & $1.0243$ & $1.0695$  & $1.0729$ & - & $0.9753$ & $1.0170$  & $1.0526$   & -  & $0.8351$ & $0.8441$ & $0.8544$ \\
AE           & -      & $1.0715$ & $1.0863$  & $1.0969$ & - & $0.9959$ & $1.0343$  &  $1.0630$ & -  & $0.8377$ & $0.8503$ & $0.8637$ \\
PreHash ($K=4$)     & - & $0.9165$ & $0.9602$ & $0.9833$ & - & $0.8169$ & $0.8832$ & $0.8987$ & - & $0.7816$ & $0.7934$ & $0.7937$ \\ 
PreHash ($K=M_q$)   & - & $0.9140$ & $0.9533$ & $0.9599$ & - & $0.7958$ & $0.8047$ & $0.8101$ & - & $0.7493$ & $0.7518$ & $0.7692$ \\ 
CEL (Ours)        & $\mathbf{0.8602}^p$ & $\mathbf{0.8689}$ & $\mathbf{0.8707}$ & $\mathbf{0.8906}$ & $\mathbf{0.7507}^p$ & $\mathbf{0.7784}$ & $\mathbf{0.7858}$ & $\mathbf{0.7926}$ & $\mathbf{0.6888}^p$ & $\mathbf{0.7301}$ & $\mathbf{0.7374}$ & $\mathbf{0.7389}$  \\
CEL-Lite (Ours)     & $0.8611^p$ & $0.8702$ & $0.8824$ & $0.8928$& ${0.7519}^p$ & $0.7820$ & $0.7901$ & $0.8014$  & ${0.6904}^p$ & $0.7343$ & $0.7407$ & $0.7421$ \\ \hline
\end{tabular}}
\label{tab:movielens ranting prediction old}
\end{table*}

\section{Appendix}

In this appendix, we provide some important implementation details and some additional ablation studies. Detailed proofs about identifiability (Appendix~B), detailed time complexity analysis and proofs (Appendix~C), and more implementation details and results (Appendix~D), can be found in \url{https://arxiv.org/abs/2302.01478}.

\begin{algorithm}
\small
  \caption{The pseudocode of CEL}\label{alg:cel}
  \SetKwInOut{Input}{Input}\SetKwInOut{Output}{Output}
  \Input{User-item interaction data matrix $X$}
  \Output{User embedding matrix $\mathbf{A}$, item cluster assignment matrix $\mathbf{S}_q$, item cluster embedding matrix $\mathbf{B}_q$, and trained model $y$}
  \BlankLine
  {Initialize $M_q\gets 1$, $\mathbf{S}_q \gets  [1]^{M\times M_q}$, set hyperparameters $\{\delta,\eta\}$}\;
    {Randomly initialize $\mathbf{A}$, $\mathbf{B}_q$, and trainable parameters in $y$ }\;
  
  \While{not converged}{
  $\mathcal{L} \gets \big\|\mathbf{W}\odot(\mathbf{X}-Y(\mathbf{A},\mathbf{S}_q\mathbf{B}_q))\big\|_{F}^2$\;
  $\mathbf{A}\gets \mathbf{A}- \eta \cdot \partial\mathcal{L}/\partial \mathbf{A}$, \quad  $\mathbf{B}_q\gets \mathbf{B}_q- \eta \cdot \partial\mathcal{L}/\partial \mathbf{B}_q$\;
    
      \If{satisfy reassign condition}{
          $\mathbf{S}_q \gets \argmin_{\mathbf{S}_q} \; \big\|\mathbf{W}\odot(\mathbf{X}-Y(\mathbf{A},\mathbf{S}_q\mathbf{B}_q))\big\|_{F}^2$\;
      }
      \If{satisfy split condition}{
          $k \gets \argmax_{k}\|\mathbf{W}\mathbf{S}_q(:,k)\|_0$  \;
          $\mathbf{B}_q(M_q+1,:)\gets  \mathbf{B}_q(k,:)$ \tcp*[r]{New row of $\mathbf{B}_q$}
          $\mathbf{S}_q(:,M_q+1) \gets [0]^{M}$ \tcp*[r]{New column of $\mathbf{S}_q$}
          Update $\mathbf{S}_q$ with GPCA\tcp*[r]{Eq.~\eqref{equ:pcavector} and ~\eqref{equ:pca score}}
          
            $M_q \gets M_q+1$\;
      }
      
    }
\end{algorithm}


%

\subsection{More Details about Experiment Settings}
\label{appendix:hypers}

We apply stochastic gradient descent with a learning rate $0.0001$ and $0.05$ for CEL and CEL-Lite, respectively. We also adopt the gradient averaging technique (Appendix D.3). The total number of iterations and other hyperparameters on each dataset is tuned by validation. 

By default, we set the initial number of clusters $M_0$ to be $1$ in all of our experiments. We have tuned $M_0$ (when $M_0>1$, as initialization, the users/items are mapped randomly into $M_0$ clusters) and find a generally consistent performance when $M_0$ is relatively small ($\leq10$). 

When splitting with GPCA, we use a priority queue to keep track of the clusters.
We set the cluster splitting threshold $d=100$, and the split is stopped when the total number of clusters $M_q$ reaches the given compression ratio.

We perform $5$-fold cross validation for the rating prediction task, and train with $5$ random seeds for other tasks, reporting the average.

For the public datasets used in this paper, we use a split ratio of $80\%/20\%$ for dividing the train set and the test set. For Amazon datasets, we report the test performance of items/users which have interaction(s) on the training set.

In our implementation, DIN and DeepFM utilize the sparse features 
from the available meta data. Such sparse features allow feature crossing in DeepFM. 
Following the common heuristic~\cite{neuralCF}, we use $16$ and $8$ neurons in the two hidden layers of DIN, respectively. We use the same network architecture for DeepFM. The business model has a sophisticated network structure and heavy feature engineering, which leads to more than $10$m trainable non-embedding parameters.

On MovieLens datasets, the genre of each movie is provided in the metadata. We define the the genre distributions, i.e., a distribution $p_g(x_k),k\in[1,M_q]$ for genre $g$ in $M_q$ clusters.
We thus can calculate the averaged entropy (e.g., in Figure~\ref{fig:cel_interpretation}) of the genre distribution in the clusters, which can be expressed below
\[\textstyle
\text{Entropy}=\frac{1}{|\text{Genres}|}\sum_{g\in\text{Genres}}\sum_{k=1}^{M_q}p_g(x_k)\log p_g(x_k).\]
Lower entropy indicates a better distinction of genres.

\begin{figure*}[t]
    \centering
    \includegraphics[width=.25\textwidth]{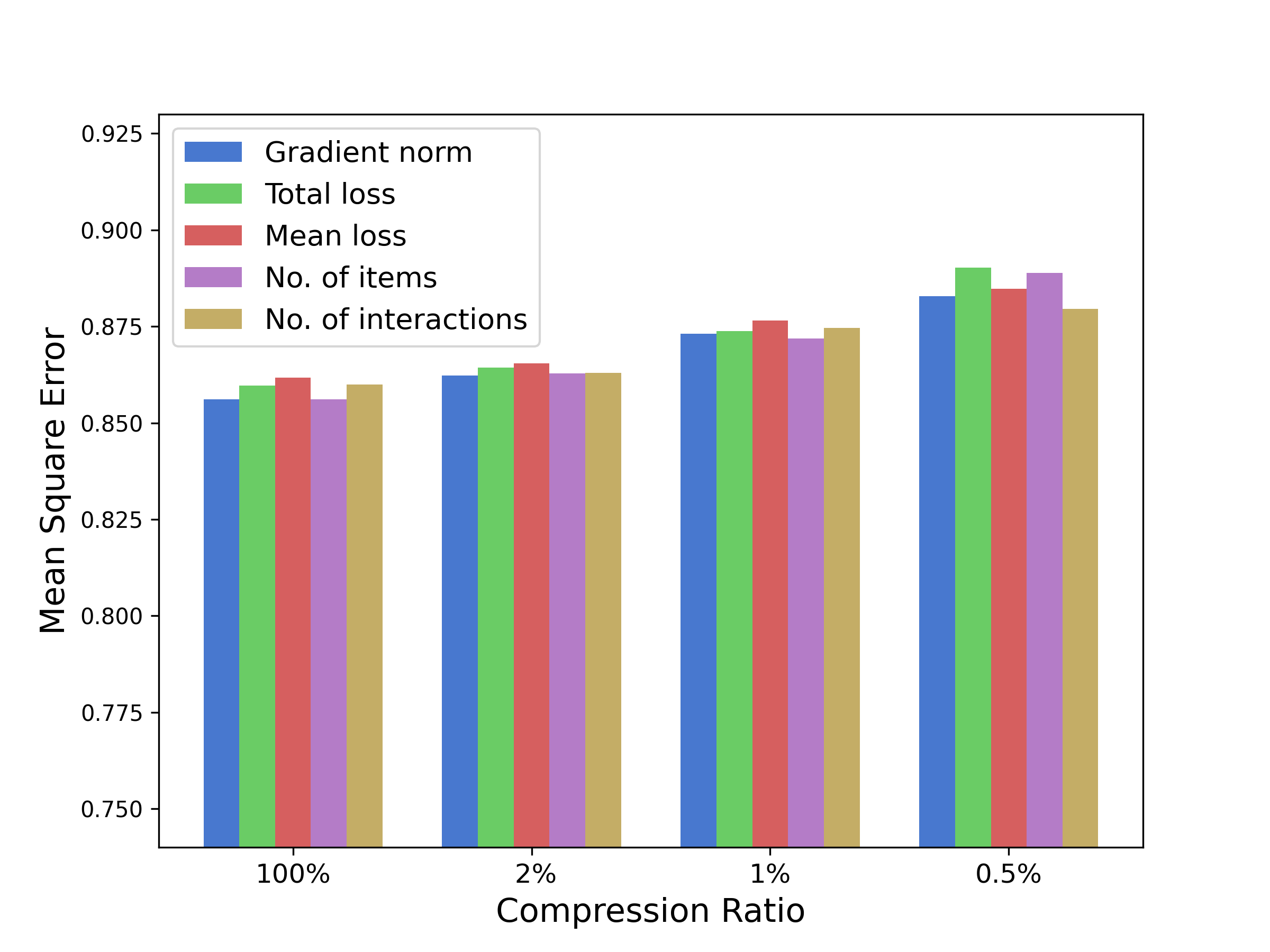}
    \includegraphics[width=.25\textwidth]{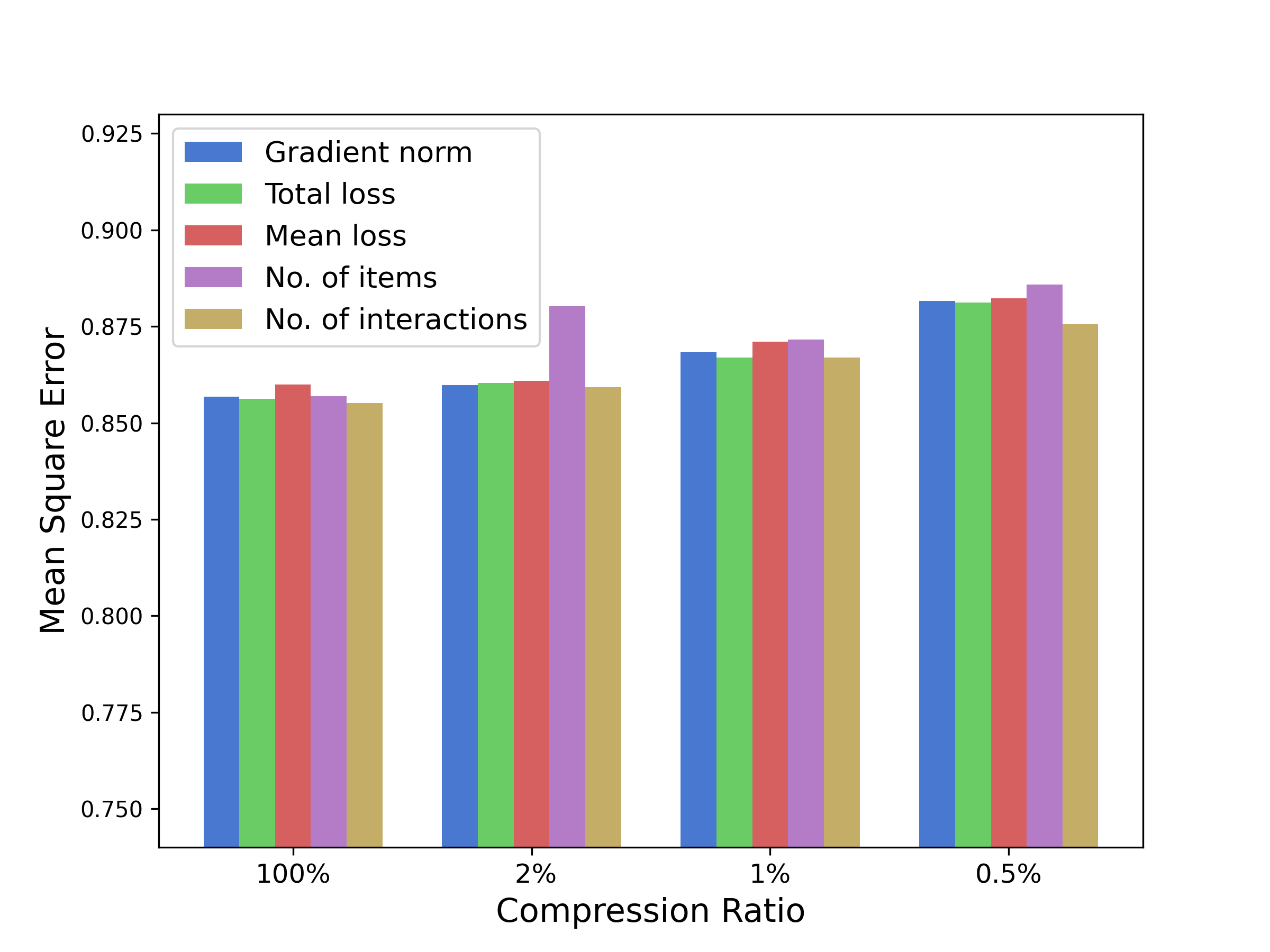}
    \includegraphics[width=.25\textwidth]{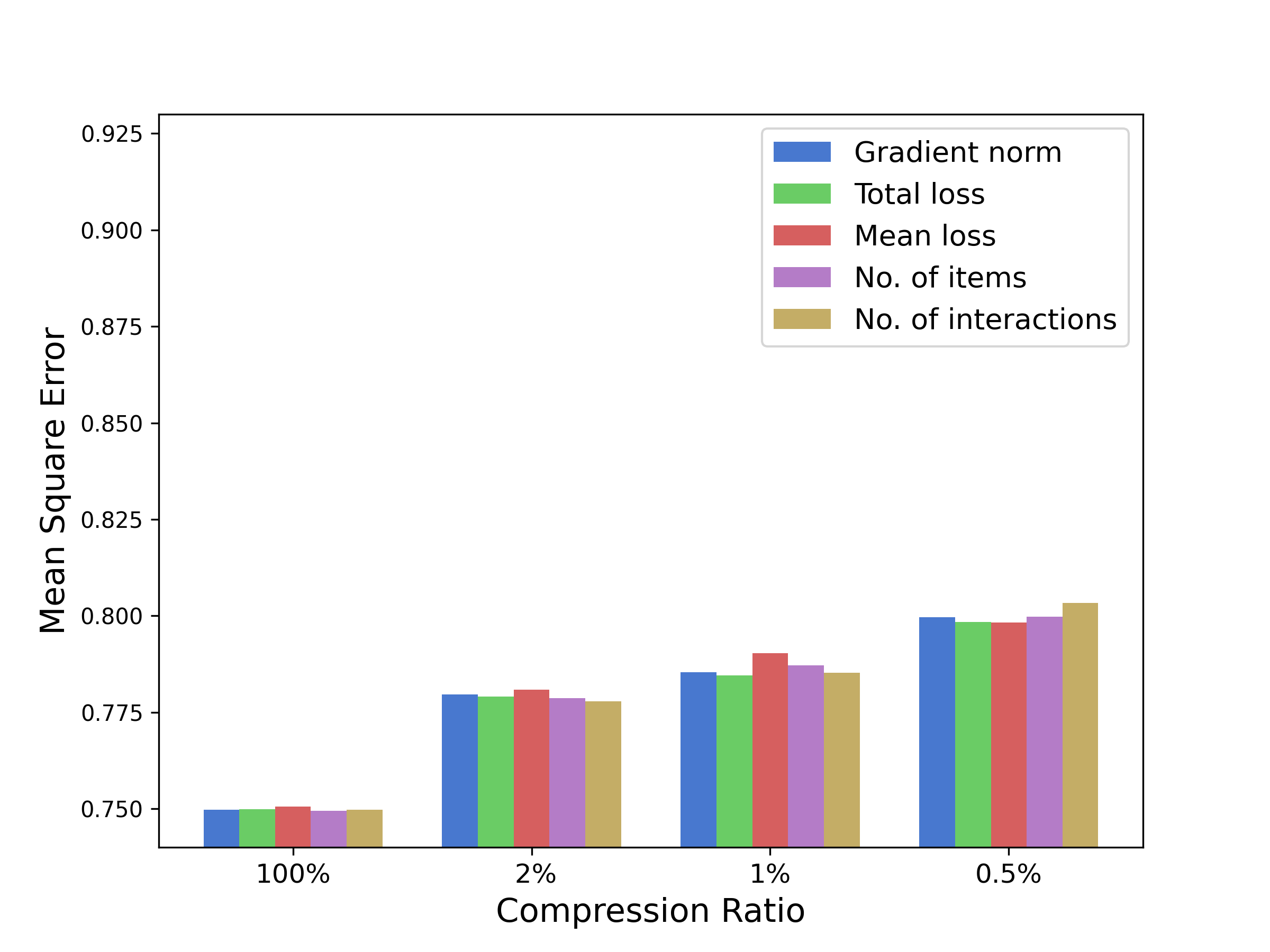}
    \raisebox{2mm}{
    \includegraphics[width=.22\textwidth]{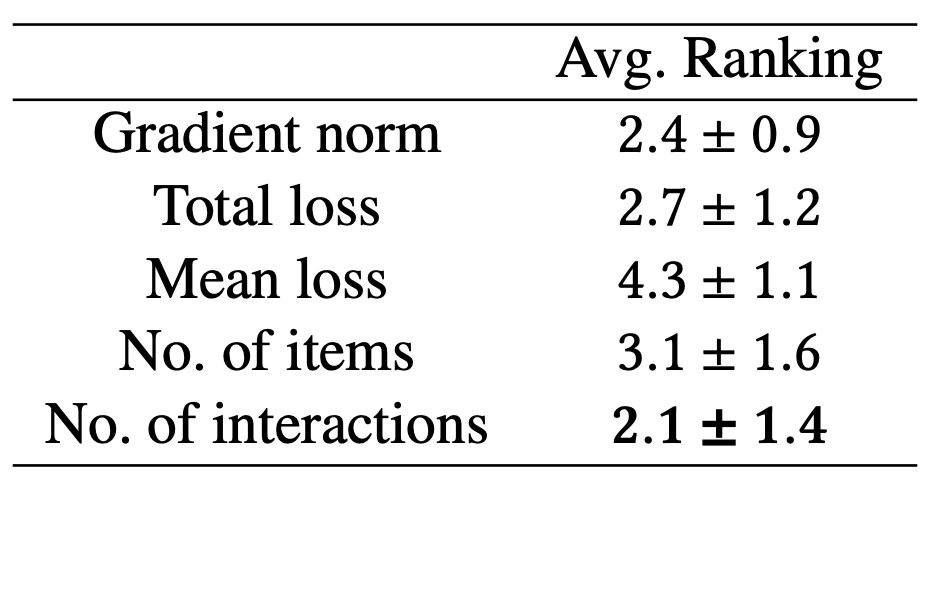}}\\
   \hspace{3mm} (a) \hspace{40mm} (b) \hspace{40mm} (c) \hspace{36mm} (d)
    \caption{The test MSE of CEL under different settings (averaged over $5$ runs): (a) MovieLens-100k + NMF; (b) MovieLens-100k + DIN; (c) MovieLens-1m + NMF. There are totally $5$ cluster choosing criteria: 1) The total gradient norm of a cluster (computed as $\text{trace}(\mathbf{G}_k^\top\mathbf{G}_k)$);\ 2) The total loss of a cluster $\mathcal{L}_k$;\ 3) The averaged loss of a cluster $\mathcal{L}_k/\|\mathbf{S}_q(:,k)\|_0$;\ 4) The number of the associated item $\|\mathbf{S}_q(:,k)\|_0$;\ 5) The number of the associated interactions (data) $\|\mathbf{W}\mathbf{S}_q(:,k)\|_0$. On (d) we also compute the averaged ranking (i.e., from $1^{st}$ the best to $5^{th}$ the worst) of each cluster choosing criterion on all the $12$ settings tested.}
    \label{fig:bar}
\end{figure*}   

\subsection{On Datasets of Different Scales}
Besides MovieLens-1m, we also performed experiments on the dataset of MovieLens-100k, MovieLens-10m, and MovieLens-25m, which contains 100k, 10m, and 25m interactions respectively. As the results shown in Table~\ref{tab:movielens ranting prediction old} and \ref{tab:movielens 25m}, CEL consistently outperforms baselines. Other observations are also in line with our discussion in the main paper. 

\begin{table}[htb]
\centering
\tiny
\caption{The test MSE of rating predictions on MovieLens-25m datasets with NMF as the feature interaction model.}
\resizebox{0.46\textwidth}{!}{
\begin{tabular}{c|cccc}
\hline
Datasets & \multicolumn{4}{c}{MovieLens-25m} \\ \hline
Compression Ratio        
             &$100\%$ &$5\%$ &$1\%$   &$0.5\%$       \\ \hline

Vanilla full embedding       & $0.8359$ & - & - & - \\
Modulo       & - & $1.0782$ & $1.1680$ & $1.2004$ \\
PreHash (\scalebox{0.8}{$K=M_q$})   & - & $ 0.9095 $ & $ 0.9566 $ & $ 1.0104 $ \\
CEL-Lite (Ours)   & $\mathbf{0.7508}$\scalebox{0.8}{$^p$} & $\mathbf{0.7809}$ & $\mathbf{0.8119}$ & $\mathbf{0.8404}$ \\ \hline
\end{tabular}}
\label{tab:movielens 25m}
\end{table}

\subsection{Ablation Studies: Split}
In this section, we perform a thorough study of the split in CEL.

\label{appendix:split}
\subsubsection{Split vs. No Split}
We first provide empirical evidence of the benefit of CEL split operation by ablations studies. For \textit{no split}, we initialize the number of clusters $M_0$ to be exactly the maximum allowed number.
The assignments are randomly initialized. 
Note that reassignments are still allowed.
The results in Table \ref{tab:splitvsnosplit} show that the split operation does bring benefits compared with no split. Another strong evidence supporting the superior performance of our GPCA split can be found in Table~\ref{tab:split methods}.

\begin{table}[h]
\centering
\caption{The test MSE. Comparison of split vs. no split.}
\scalebox{0.8}{
\begin{tabular}{c|ccc|ccc}
\hline
MovieLens-1m      & \multicolumn{3}{c|}{NMF} & \multicolumn{3}{c}{DIN} \\  \hline
Compression Ratio & $100\%$ &  $1\%$  & $0.5\%$ & $100\%$ &  $1\%$  & $0.5\%$        \\ \hline
CEL (no split) & 0.7582& 0.7991& 0.8025& 0.7428& 0.7824& 0.8023\\
CEL            & 0.7474& 0.7846& 0.7926& 0.7390& 0.7787& 0.7868 \\ \hline 
\end{tabular}}
\label{tab:splitvsnosplit}
\end{table}

\subsubsection{Criterion for Choosing the Split Cluster}
\label{appendix:split criterion}

We test $5$ cluster choosing criteria, results shown in Figure~\ref{fig:bar}. We can see that they generally lead to similarly good performance. It demonstrates the robustness of CEL upon cluster choosing criteria. It is worth noting that the mean loss criterion performs worse than the others, probably because the cold items tend to have larger men loss than the warm items, as revealed by our experiment on Figure~\ref{fig:loss vs frequency}. Thus focusing on the mean may overly emphasize the cluster with cold items, and thus lead to undesired behavior. Thus in this sense, the total loss (or other criteria) may perform better than the mean.

\subsubsection{Threshold for GPCA Split}

We test different thresholds $\delta$ for split with GPCA. The results are shown in Table~\ref{tab:split threshold}, where we split till a compression ratio of $2\%$. As we can see, they generally lead to similarly good performance, indicating the robustness of GPCA split w.r.t the threshold $\delta$. For the other results reported in this paper, we adopt a default value of $\delta=0$. Noted that GPCA always finds a significant $1^{st}$ principal component: the eigenvalue decompositions in our experiments show that the largest eigenvalue is usually $\sim\!100$ times larger than the second largest eigenvalue. 

\begin{table}[hbt!]
\centering
\caption{The test MSE for different split thresholds.}
\resizebox{0.42\textwidth}{!}{
\begin{tabular}{l|cc}
\hline
\multicolumn{1}{c|}{MovieLens}         & 100k & 1m \\
\hline
$\delta=0$                 & \textbf{0.8684} & \textbf{0.7795} \\
$\delta=$ median of the $1^{st}$ principal component scores           & 0.8691 & 0.7807 \\
$\delta$ chosen to balance the split of data (Section~\ref{sec:online}) & 0.8689 & 0.7808\\
\hline
\end{tabular}}
\label{tab:split threshold}
\end{table}

\subsection{Ablation Studies: Reassignment}
\label{appendix:Assignment}
\subsubsection{Reassignment}

We conduct an experiment to quantify the effect of reassignment. The results as in Table~\ref{tab:reassignment} show that under the two scenarios we tested, performing extensive steps of reassignments will lead to slight overfitting, while performing few steps of reassignment will lead to underfitting. Thus the total number of reassignments (as well as $t_1$) is a hyperparameter that can be tuned (e.g., by grid search). This reveals the rationale behind our strategy of a constant number of reassignments. 

\begin{table}[hbt!]
\centering
\caption{The MSE for different frequencies of reassignment. 
}
\resizebox{0.47\textwidth}{!}{
\begin{tabular}{c|ccc|ccc}
\hline
MovieLens-1m & \multicolumn{3}{c|}{Compression Ratio $0.5\%$} & \multicolumn{3}{c}{Compression Ratio $1\%$} \\ \hline
$t_1$  &$20$ &$40$   &$100$ &$20$ &$40$   &$100$  \\ \hline
Total no. of reassignments  &$\approx100$ &$\approx50$   &$\approx20$ &$\approx100$ &$\approx50$   &$\approx20$ \\ \hline
CEL (training MSE)      & 0.7307 & {0.7318} & {0.7329} & {0.7071} & {0.7179} & {0.7187} \\ 
CEL (test MSE)      & 0.7931 & \textbf{0.7926} & 0.7939 & 0.7867 & \textbf{0.7858} & 0.7860 \\ \hline
\end{tabular}}
\label{tab:reassignment}
\end{table}

\subsubsection{Retrain with the Learned Clustering}

We further conduct experiments to retraining with the learned clustering structure. 
The settings of the retraining are identical to the training from which we obtained the clustering structure, so that only difference is that now the assignment is fixed (no cluster split nor cluster reassignment). The results are shown in  Table~\ref{tab:retrain}. As can be seen, the test performance of retraining is nearly identical to that of training, indicating that our CEL algorithm has learned effective clustering structures that can generalize well with different initialization of embeddings. It also implies that CEL does not encounter obvious optimization difficulties during the training to obtain such a clustering structure. 

\begin{table}[hbt!]
\centering
\caption{The test MSE when performing retrain.} 
\scalebox{0.8}{
\begin{tabular}{c|ccc|ccc}
\hline
Datasets & \multicolumn{3}{c|}{MovieLens-100k} & \multicolumn{3}{c}{MovieLens-1m} \\ \hline
Compression Ratio        
             &$100\%$ &$1\%$   &$0.5\%$ &$100\%$ &$1\%$   &$0.5\%$ \\ \hline
CEL       & 0.8671 & {0.8707} & {0.8906} & {0.7507} & {0.7858} & {0.7926} \\ 
CEL (Retrain)      & - & 0.8704 & 0.8905 & - & 0.7858 & 0.7927 \\ \hline
\end{tabular}}
\label{tab:retrain}
\end{table}

\newpage

\section{More about Identifiability of CEL}
\label{appendix:all about Identifiability}
\subsection{Definitions of Related Concepts}
\subsubsection{Essentially Uniqueness.}
We say that the NMF of $\mathbf{X} = \mathbf{A} \mathbf{B}^\top$ is \emph{essentially unique}, if the nonnegative matrix $\mathbf{A}$ and $\mathbf{B}$ are identifiable up to a common permutation and scaling/counter-scaling, that is, $\mathbf{X} = \mathbf{A}' \mathbf{B}'^\top$ implies $\mathbf{A}'=\mathbf{A}\mathbf{\Pi}\mathbf{D}$ and $\mathbf{B}'=\mathbf{B}\mathbf{\Pi}\mathbf{D}^{-1}$ for a permutation matrix $\mathbf{\Pi}$ and a full-rank diagonal scaling matrix $\mathbf{D}$.

\subsubsection{Sufficiently Scatteredness.}
\label{appendix:Definition of Sufficiently Scattered}
To investigate the identifiability, it is a common practice to assume the rows of $\mathbf{B}$ are sufficiently scattered \cite{eTREE,jnkm,identifiability}. Formally, it should have the following property.
\begin{definition} (Sufficiently Scattered) \cite{identifiability} The rows of a nonnegative matrix $\mathbf{B}\in \mathbb{R}^{M\times R}$ are said to be sufficiently scattered if: 1) $cone\{\mathbf{B}^\top\} \supseteq \mathcal{C}$; and 2) $cone\{\mathbf{B}^\top\}^\star \cap bd\{\mathcal{C}^*\}=\{\lambda \mathbf{e}_k|\lambda\geq0, k=1,\ldots,R\}$. 
Here $cone\{\mathbf{B}^\top\}=\{\mathbf{x}|\mathbf{x}=\mathbf{B}^\top\mathbf{\theta},\forall\mathbf{\theta}\geq\mathbf{0},\mathbf{1}^\top\mathbf{\theta}=1\}$ and $cone\{\mathbf{B}^\top\}^\star=\{\mathbf{y}|\mathbf{x}=\mathbf{x}^\top\mathbf{y},\forall\mathbf{x}\in cone\{\mathbf{B}^\top\}\}$ are the conic hull of $\mathbf{B}^\top$ and its dual cone, respectively; $\mathcal{C}=\{\mathbf{x}|\mathbf{x}^\top\mathbf{1}\geq\sqrt{R-1}\|\mathbf{x}\|_2\}$, $\mathcal{C}^*=\{\mathbf{x}|\mathbf{x}^\top\mathbf{1}\geq\|\mathbf{x}\|_2\}$; $bd$ is the boundary of a set.
\end{definition}

The sufficiently scattered condition essentially
means that $cone\{\mathbf{B}^\top\}$ contains $\mathcal{C}$, a central region of the nonnegative orthant, as its subset.

\subsection{Proof of Theorem \ref{theorem:CELIdentifiability}}
\label{appendix:proofoftheorem1}

First we have the following lemma, which is taken from the previous work \cite{identifiability}.

\begin{lemma}
(NMF Identifiability)
Suppose $\mathbf{X} = \mathbf{A} \mathbf{B}^\top$ where $\mathbf{A} \in\mathbb{R}^{N\times R}$ and $\mathbf{B} \in\mathbb{R}^{M\times R}$ are two nonnegative matrices, and $rank(\mathbf{X}) = rank(\mathbf{A}) = R$. If further assume that the rows of $\mathbf{B}$ are sufficiently scattered. Then $\mathbf{A}$ and $\mathbf{B}$ are essentially unique.
\label{lemma:Identifiability}
\end{lemma}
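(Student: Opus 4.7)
The plan is to follow the standard convex-geometric route to NMF identifiability under sufficient scatteredness. Suppose $\mathbf{X} = \mathbf{A}\mathbf{B}^\top = \mathbf{A}'\mathbf{B}'^\top$ are two nonnegative rank-$R$ factorizations. My first move is to reduce the ambiguity to a single invertible change of basis: since $\mathrm{rank}(\mathbf{X})=R$ forces $\mathrm{rank}(\mathbf{A}')=R$ as well, the columns of $\mathbf{A}$ and $\mathbf{A}'$ span the same $R$-dimensional subspace, giving a unique invertible $\mathbf{Q}\in\mathbb{R}^{R\times R}$ with $\mathbf{A}' = \mathbf{A}\mathbf{Q}$, and correspondingly $\mathbf{B}' = \mathbf{B}\mathbf{Q}^{-\top}$. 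Essential uniqueness then reduces to showing $\mathbf{Q}$ is a monomial matrix, i.e., $\mathbf{Q} = \mathbf{\Pi}\mathbf{D}$ for a permutation $\mathbf{\Pi}$ and a strictly positive diagonal $\mathbf{D}$.

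Next I would translate the nonnegativity constraint on $\mathbf{B}'$ into a cone containment. The condition $\mathbf{B}\mathbf{Q}^{-\top} \geq 0$ is exactly $\mathbf{Q}^{-\top}\,\mathrm{cone}(\mathbf{B}^\top) \subseteq \mathbb{R}^R_+$. Sufficient scatteredness gives $\mathrm{cone}(\mathbf{B}^\top)\supseteq \mathcal{C}$, so in particular $\mathbf{Q}^{-\top}\mathcal{C} \subseteq \mathbb{R}^R_+$. Dualising this inclusion puts each column of $\mathbf{Q}$ into $\mathcal{C}^*$. A symmetric inclusion comes from $\mathbf{A}' = \mathbf{A}\mathbf{Q} \geq 0$, but the real force comes from combining $\mathbf{Q}^{-\top}\,\mathrm{cone}(\mathbf{B}^\top) \subseteq \mathbb{R}^R_+$ with the boundary condition $\mathrm{cone}(\mathbf{B}^\top)^{\star}\cap\mathrm{bd}(\mathcal{C}^*) = \{\lambda\mathbf{e}_k : \lambda\geq 0,\, k=1,\ldots,R\}$: this rigid characterisation forces each of the $R$ linearly-independent columns of $\mathbf{Q}^{-\top}$ to land on one of the $R$ coordinate rays, which is precisely the monomial structure sought.

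From ``$\mathbf{Q}$ is a nonnegative monomial'' together with invertibility, the diagonal part is strictly positive, so $\mathbf{Q} = \mathbf{\Pi}\mathbf{D}$ and hence $\mathbf{A}' = \mathbf{A}\mathbf{\Pi}\mathbf{D}$, $\mathbf{B}' = \mathbf{B}\mathbf{\Pi}\mathbf{D}^{-1}$, which is the definition of essential uniqueness.

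The main obstacle will be the dual-cone step in the middle paragraph: rigorously upgrading ``columns of $\mathbf{Q}$ lie in $\mathcal{C}^*$'' to ``columns of $\mathbf{Q}^{-\top}$ are nonnegative multiples of standard basis vectors.'' The geometric intuition is that $\mathcal{C}$ is the largest centrally inscribed cone inside $\mathbb{R}^R_+$, and only monomial linear maps can preserve this ``nearly-orthant'' structure; turning that intuition into a clean argument requires the full strength of both parts of the sufficient scatteredness definition, and is exactly why this result is usually invoked as a black-box lemma from the prior NMF identifiability literature rather than re-proved in place.
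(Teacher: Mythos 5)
The paper does not actually re-prove this lemma: its ``proof'' is a citation, stating that the lemma is a direct implication of Theorem~1 of the cited identifiability work, and the convex-geometric argument you sketch is precisely the argument underlying that cited theorem. So your overall route is the right one and is consistent with what the paper relies on. However, judged as a proof, your proposal has a genuine gap, and it is exactly the step you yourself flag: everything up to ``the columns of the change-of-basis factor lie in $\mathcal{C}^*$'' is routine (reduce to $\mathbf{A}'=\mathbf{A}\mathbf{Q}$, $\mathbf{B}'=\mathbf{B}\mathbf{Q}^{-\top}$ via the rank conditions, then read nonnegativity as a cone containment), but the passage from that containment to ``$\mathbf{Q}$ is monomial'' is the entire content of the identifiability theorem, and you do not supply it. The standard completion is not a soft appeal to $\mathcal{C}$ being the largest inscribed cone; it combines \emph{both} parts of the sufficiently scattered definition with a quantitative determinant (volume) inequality: one shows $|\det \mathbf{Q}|\,|\det \mathbf{Q}^{-1}|=1$ forces the relevant inequality for vectors in $\mathcal{C}^*$ to hold with equality, hence the columns of $\mathbf{Q}^{-\top}$ must lie on $bd\{\mathcal{C}^*\}$, and only then does the second condition $cone\{\mathbf{B}^\top\}^\star \cap bd\{\mathcal{C}^*\}=\{\lambda \mathbf{e}_k\}$ pin them to the coordinate rays. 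Without that argument the proof is a restatement of the goal.

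Two smaller inaccuracies in the middle paragraph: the nonnegativity $\mathbf{B}\mathbf{Q}^{-\top}\geq 0$ places the columns of $\mathbf{Q}^{-\top}$ (not of $\mathbf{Q}$) in $cone\{\mathbf{B}^\top\}^\star \subseteq \mathcal{C}^*$, and the ``symmetric inclusion'' from $\mathbf{A}\mathbf{Q}\geq 0$ only puts the columns of $\mathbf{Q}$ in the dual cone of the rows of $\mathbf{A}$, which is not usable since no scatteredness is assumed on $\mathbf{A}$ (only full column rank). Given that the paper itself imports this lemma as a black box, the cleanest fix is either to cite the prior identifiability theorem outright, as the paper does, or to carry out the determinant-equality step explicitly rather than gesturing at it.
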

Note that this lemma is a direct implication of Theorem 1 in \cite{identifiability}.

\begin{proof}
\emph{(Proof of Theorem \ref{theorem:CELIdentifiability})}
We assumed the rows of $\mathbf{B}_q$ are sufficiently scattered. Since $\mathbf{S}_q\mathbf{B}_q$ is a permutation with repetition of rows in $\mathbf{B}_q$, rows of $\mathbf{S}_q\mathbf{B}_q$ are sufficiently scattered (because $cone\{\mathbf{B}_q^\top\mathbf{S}_q^\top\}=cone\{\mathbf{B}_q^\top\}$ following from our assumption of $\mathbf{S}_q$ being of full-column rank).
Now, since $A$ is of full-column rank and rows of $\mathbf{S}_q\mathbf{B}_q$ are sufficiently scattered, $\mathbf{A}$ and $\mathbf{S}_q\mathbf{B}_q$ in $\mathbf{X} = \mathbf{A} \mathbf{B}_q^\top \mathbf{S}_q^\top$ are essentially unique by Lemma \ref{lemma:Identifiability}. 

Given that $\mathbf{S}_q$ is of full-column rank, $\mathbf{S}_q\mathbf{B}_q$ is a permutation with repetition of distinct rows in $\mathbf{B}_q$ and every row in $\mathbf{B}_q$ appears in $\mathbf{S}_q\mathbf{B}_q$. Regarding $\mathbf{S}_q\mathbf{B}_q$ as the subject of decomposition, $\mathbf{S}_q$ and $\mathbf{B}_q$ can be uniquely determined up to a permutation. Note that there is no diagonal scaling ambiguity between $\mathbf{S}_q$ and $\mathbf{B}_q$, since we have required $\mathbf{S}_q$ to be a matrix of $\{0,1\}$.

Overall, $\mathbf{A}$, $\mathbf{S}_q$ and $\mathbf{B}_q$ are essentially unique.
\end{proof}

\subsection{Proof of Proposition \ref{proposition:clusternumber}}
\label{appendix:proofofpropositionclusternumber}

\begin{proof}
Following the proof of Theorem \ref{theorem:CELIdentifiability}, we can similarly deduce that $\mathbf{A}'$ and $\mathbf{S}_{q+1}\mathbf{B}_{q+1}$ are essentially unique. Thus $\mathbf{S}_{q+1}\mathbf{B}_{q+1}=\mathbf{S}_{q}\mathbf{B}_{q}\mathbf{\Pi}\mathbf{D}$ where $\mathbf{D}$ is a full rank
diagonal nonnegative matrix and $\mathbf{\Pi}$ is a permutation matrix. From the assumption we know that there are only $M_q$ distinct rows in $\mathbf{S}_{q}\mathbf{B}_{q}$, thus $M_q$ distinct rows in $\mathbf{S}_{q+1}\mathbf{B}_{q+1}$. From the assumption that $\mathbf{S}_{q+1}$ being of full-column rank, we know that all rows of $\mathbf{B}_{q+1}$ appears in $\mathbf{S}_{q+1}\mathbf{B}_{q+1}$, which implies only $M_q$ distinct rows in $\mathbf{B}_{q+1}$, and they corresponds to a permutation/scaling of the rows in $\mathbf{B}_{q}$.
\end{proof}

\subsection{Uniqueness of the Optimal Cluster Number}
\label{appendix:UniquenessM_q}
\begin{proposition}
(Irreducibility of the Optimal Cluster Number) Following from the assumptions of Theorem \ref{theorem:CELIdentifiability}, there does not exist a decomposition of the data matrix such that $\mathbf{X} = \mathbf{A}' \mathbf{B}_{q-1}^\top \mathbf{S}_{q-1}^\top$, where $\mathbf{A}' \in\mathbb{R}^{N\times R}$, $\mathbf{B}_{q-1}\in \mathbb{R}^{M_{q-1}\times R}$, $\mathbf{S}_{q-1} \in \{0,1\}^{M\times M_{q-1}}$, $\|\mathbf{S}_{q-1}(j,:)\|_0 = 1,\forall j\in [1,M]$,  $rank(\mathbf{A}') = R$, $M_{q}>M_{q-1}\geq R$, and $\mathbf{S}_{q-1}$ is of full-column rank.
\label{proposition:UniquenessM_q}
\end{proposition}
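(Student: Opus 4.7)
The plan is to derive a contradiction from the essential uniqueness already established in the proof of Theorem~\ref{theorem:CELIdentifiability} by counting the number of distinct rows on both sides of the identity it forces between $\mathbf{S}_q \mathbf{B}_q$ and $\mathbf{S}_{q-1} \mathbf{B}_{q-1}$.

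First I would collapse the given decomposition as $\mathbf{X} = \mathbf{A}\,(\mathbf{S}_q \mathbf{B}_q)^\top$ and reuse the conic-hull argument from the proof of Theorem~\ref{theorem:CELIdentifiability}: the full-column-rank assumption on $\mathbf{S}_q$ yields $\mathrm{cone}\{(\mathbf{S}_q\mathbf{B}_q)^\top\} = \mathrm{cone}\{\mathbf{B}_q^\top\}$, so the rows of $\mathbf{S}_q \mathbf{B}_q$ inherit sufficient scatteredness from those of $\mathbf{B}_q$. Combined with $\mathrm{rank}(\mathbf{A}) = \mathrm{rank}(\mathbf{X}) = R$, Lemma~\ref{lemma:Identifiability} implies that $(\mathbf{A}, \mathbf{S}_q \mathbf{B}_q)$ is essentially unique; in particular, any nonnegative decomposition $\mathbf{X} = \mathbf{A}' \mathbf{C}^\top$ of inner dimension $R$ with $\mathrm{rank}(\mathbf{A}') = R$ must satisfy $\mathbf{C} = \mathbf{S}_q \mathbf{B}_q \mathbf{\Pi} \mathbf{D}^{-1}$ for some permutation $\mathbf{\Pi}$ and full-rank nonnegative diagonal $\mathbf{D}$.

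Next, suppose for contradiction that a decomposition $\mathbf{X} = \mathbf{A}' \mathbf{B}_{q-1}^\top \mathbf{S}_{q-1}^\top$ exists under the stated hypotheses. Setting $\mathbf{C} = \mathbf{S}_{q-1} \mathbf{B}_{q-1}$ forces $\mathbf{S}_{q-1} \mathbf{B}_{q-1} = \mathbf{S}_q \mathbf{B}_q \mathbf{\Pi} \mathbf{D}^{-1}$, and I would close by a clean row-count. The right-hand side has exactly $M_q$ distinct rows: $\mathbf{B}_q$ has no repeated rows by the hypotheses of Theorem~\ref{theorem:CELIdentifiability}, every row of $\mathbf{B}_q$ appears in $\mathbf{S}_q \mathbf{B}_q$ because $\mathbf{S}_q$ has full column rank, and right-multiplication by the invertible $\mathbf{\Pi} \mathbf{D}^{-1}$ is a bijection on rows that preserves distinctness. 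The left-hand side has at most $M_{q-1}$ distinct rows, since the one-hot row structure of $\mathbf{S}_{q-1}$ makes each row of $\mathbf{S}_{q-1} \mathbf{B}_{q-1}$ coincide with one of the $M_{q-1}$ rows of $\mathbf{B}_{q-1}$. Because $M_{q-1} < M_q$, this is the required contradiction.

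The step I expect to require the most care is the implicit nonnegativity assumption on $\mathbf{A}'$ and $\mathbf{B}_{q-1}$, which Lemma~\ref{lemma:Identifiability} needs in order to deliver essential uniqueness of a second factorization; this is not explicitly written in Proposition~\ref{proposition:UniquenessM_q} but is implicit in the NMF framework used throughout Section~\ref{sec:identifiability}. Modulo this, the argument is driven by two simple observations — essential uniqueness forces any alternative cluster-embedding factor to be a column-wise permutation-and-scaling of $\mathbf{S}_q \mathbf{B}_q$, and such a transformation preserves the number of distinct rows — which together make the cardinality mismatch $M_{q-1} < M_q$ immediately fatal to the hypothesis.
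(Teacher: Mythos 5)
Your proof is correct and takes essentially the same route as the paper's: the paper obtains the contradiction by invoking Proposition~\ref{proposition:clusternumber} with the roles of the two decompositions swapped (so that $\mathbf{B}_q$ would have at most $M_{q-1}<M_q$ distinct rows, contradicting the no-repeated-rows assumption of Theorem~\ref{theorem:CELIdentifiability}), and your argument simply inlines that proposition's proof, namely essential uniqueness of $(\mathbf{A},\mathbf{S}_q\mathbf{B}_q)$ via Lemma~\ref{lemma:Identifiability} followed by the distinct-row count forced by the full column rank of $\mathbf{S}_q$. Your remark that the nonnegativity of $\mathbf{A}'$ and $\mathbf{B}_{q-1}$ is implicit is apt; the paper likewise treats it as understood from the NMF setting.
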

\begin{proof}
Suppose such matrix decomposition exists. Then by Proposition \ref{proposition:clusternumber} we know that for the decomposition in Theorem \ref{theorem:CELIdentifiability}, $\mathbf{B}_{q}$ will have repeated rows. This will violate our assumption of $\mathbf{B}_{q}$ having no repeated rows, which proves the above proposition by contradiction. The above proposition shows that no $M_{q-1}<M_q$ satisfies Theorem \ref{theorem:CELIdentifiability}. 
\end{proof}

Together with Proposition \ref{proposition:clusternumber}, we know $M_q$ is unique, if the assumptions of Theorem \ref{theorem:CELIdentifiability} holds. Formally:

\begin{corollary}
(Uniqueness of the Optimal Cluster Number) Following from the assumptions of Theorem \ref{theorem:CELIdentifiability}, there does not exist a decomposition of the data matrix such that $\mathbf{X} = \mathbf{A}' \mathbf{B}_{p}^\top \mathbf{S}_{p}^\top$, where $\mathbf{A}' \in\mathbb{R}^{N\times R}$, $\mathbf{B}_{p}\in \mathbb{R}^{M_{p}\times R}$ has no repeated rows, $\mathbf{S}_{p} \in \{0,1\}^{M\times M_{p}}$, $\|\mathbf{S}_{p}(j,:)\|_0 = 1,\forall j\in [1,M]$, $rank(\mathbf{A}') = R$, $M_{p}\neq M_{q}$, and $\mathbf{S}_{p}$ is of full-column rank.
\end{corollary}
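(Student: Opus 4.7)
The plan is to argue by contradiction and split on whether the alternative cluster count $M_p$ is strictly less than or strictly greater than $M_q$; each direction is then closed by invoking one of the two results proved just above. Suppose for contradiction that some decomposition $\mathbf{X} = \mathbf{A}' \mathbf{B}_p^\top \mathbf{S}_p^\top$ satisfies all the listed conditions with $M_p \neq M_q$.

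First I would dispatch the easier case $M_p < M_q$. Here the hypothetical decomposition fits the setup of Proposition~\ref{proposition:UniquenessM_q} (Irreducibility) verbatim, with $p$ playing the role of $q-1$: the matrix shapes, the one-hot structure of $\mathbf{S}_p$, the rank of $\mathbf{A}'$, the bound $M_p \geq R$, and the full-column-rank condition on $\mathbf{S}_p$ all carry over. That proposition asserts no such decomposition exists, so this case closes with essentially no extra work.

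For $M_p > M_q$ I would apply Proposition~\ref{proposition:clusternumber} with $p$ in place of $q+1$. That proposition forces $\mathbf{B}_p$ to have at most $M_q$ distinct rows. Combined with the fact that $\mathbf{S}_p$ is of full-column rank (so every row of $\mathbf{B}_p$ actually shows up in $\mathbf{S}_p \mathbf{B}_p$, i.e.\ no row is ever skipped), this forces at least two of the $M_p > M_q$ rows of $\mathbf{B}_p$ to coincide, contradicting the corollary's explicit no-repeated-rows hypothesis. Both cases produce contradictions, so no alternative $M_p \neq M_q$ exists.

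The main obstacle I anticipate is hypothesis bookkeeping rather than mathematical substance: Proposition~\ref{proposition:clusternumber} also requires the rows of $\mathbf{B}_{q+1}$ to be sufficiently scattered, whereas the corollary's statement explicitly asks only for full-column rank of $\mathbf{S}_p$. I would either interpret sufficient scatteredness as inherited implicitly (it is a standard precondition for the identifiability framing that makes the alternative factorization meaningful) and flag this reading at the top of the proof, or add it to the hypotheses of the corollary. Once that minor point is fixed, the proof is the two-line case dispatch above, so no genuinely new estimates or constructions are needed beyond what Proposition~\ref{proposition:clusternumber} and Proposition~\ref{proposition:UniquenessM_q} already provide.
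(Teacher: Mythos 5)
Your proposal is correct and takes essentially the same route as the paper, which obtains the corollary precisely by combining Proposition~\ref{proposition:UniquenessM_q} (excluding $M_p<M_q$) with Proposition~\ref{proposition:clusternumber} (whose conclusion that $\mathbf{B}_p$ would have only $M_q$ distinct rows contradicts the no-repeated-rows hypothesis when $M_p>M_q$). The hypothesis-bookkeeping point you raise about sufficient scatteredness of $\mathbf{B}_p$ is a looseness already present in the paper's own statements, so your flagged reading matches the paper's intent.
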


In summary, our Theorem \ref{theorem:CELIdentifiability} has identified a set of solutions of $\mathbf{A}$, $\mathbf{S}_q$ and $\mathbf{B}_q$, such that the solutions do  not contain repeated rows in $\mathbf{B}_q$ (otherwise the solution can be further reduced to a simpler form). Such a set of irreducible solutions naturally leads to an optimal cluster number. Also, the (essentially) uniqueness of the solutions implies the uniqueness of the optimal cluster number.


\section{More on Complexity of CEL(-Lite)}
\label{appendix:b time complexity}
\subsection{Time and Space Complexities of GPCA}
For CEL, time complexity of GPCA is $\mathcal{O}(\|\mathbf{S}_q(:,k)\|_0 N R)$, which is linear in $\|\mathbf{S}_q(:,k)\|_0$ and is the same as gradient calculation. 
PCA (including eigen decomposition) only involves an $R\times R$ matrix. Its time complexity is $\mathcal{O}(\|\mathbf{S}_q(:,k)\|_0 R^2+R^3)$, where $R$ is usually small. The space complexity is also low, which can be as low as $\mathcal{O}(R^2)$ in theory. Thus, split with GPCA is both time and space efficient. The time complexity can be further reduced via parallel computation. 

Note that when we use a constant number of (less or equal to) $n$ buffer interactions for each item considered in the GPCA, the time complexity will reduce to $\mathcal{O}(n\|\mathbf{S}_q(:,k)\|_0 R)$.

\subsection{Time Complexity of CEL-Lite Formally} 
\label{appendix:proof of online time complexity}

\begin{theorem}
The time complexity of CEL-Lite is $\mathcal{O}(DR)$.
\label{theorem:online time complexity}
\end{theorem}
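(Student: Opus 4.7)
The plan is to decompose the total running time of CEL-Lite into the contributions from its three basic operations---per-batch embedding updates, cluster reassignments, and GPCA cluster splits---and to bound each contribution, summed over the entire online stream, by $\mathcal{O}(DR)$ when $n$ and $m$ are treated as constants. Since the three contributions simply add, this yields the claimed overall bound.

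First I would handle the embedding update. For each of the $D/b$ batches, one stochastic gradient step is performed using the batch together with the buffered history. Because each user and each item stores at most $n$ buffered interactions, the work per batch is $\mathcal{O}(nbR)$, giving total cost $\mathcal{O}(nDR)$. Next, for reassignment: per batch only the $b$ items touched by the batch are considered, and each is tested against only $m$ candidate clusters, with each test evaluating a cluster embedding on the at most $n$ buffered interactions of that item. This yields $\mathcal{O}(nmbR)$ per batch and $\mathcal{O}(nmDR)$ in total, where the batch count cancels the batch size.

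The hard part will be the split bookkeeping, because the number of clusters grows with the stream and each split's cost depends on the size of the chosen cluster. Here I would invoke Strategies~\ref{strategy:1} and \ref{strategy:2}: after every split, each resulting cluster contains at least $d/2$ associated interactions, so at any point in the stream there are at most $2D/d$ clusters, and hence at most $2D/d$ GPCA splits occur over the entire run. At the moment a cluster $k$ is split, the two strategies also guarantee $\|\mathbf{S}_q(:,k)\|_0 \leq 2d$ (by Strategy~\ref{strategy:1} each retained item contributes at most $d$ interactions, and by Strategy~\ref{strategy:2} the splitting cluster is triggered at just over $2d$ interactions). The cost of a single GPCA split---assembling buffered-gradient rows plus extracting the first principal component of an $R\times R$ Gram matrix---is $\mathcal{O}(n\|\mathbf{S}_q(:,k)\|_0 R + R^3) = \mathcal{O}(ndR)$, absorbing the $R^3$ term since $R$ and $d$ are constants in the data scale. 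Multiplying by at most $2D/d$ splits yields a cumulative split cost of $\mathcal{O}(nDR)$.

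Adding the three contributions and treating $n$, $m$, $b$, and $d$ as constants gives the total running time $\mathcal{O}(DR)$, matching the lower bound of $\Omega(DR)$ noted in Section~\ref{sec:time complexity} for any algorithm that touches every interaction and updates an $R$-dimensional embedding. The one subtlety worth verifying is the generalization mentioned in the paper, namely that $d$ may be replaced by a monotonically non-decreasing sublinear function of $D$ without altering the bound: the number of splits becomes $\mathcal{O}(D/d(D))$ and the per-split cost becomes $\mathcal{O}(nd(D)R)$, so their product remains $\mathcal{O}(nDR)$, while the per-item buffer bound $n$ keeps the embedding and reassignment terms unchanged.
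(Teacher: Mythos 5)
Your proof is correct and follows essentially the same route as the paper's: the same three-way decomposition (embedding updates, reassignments, GPCA splits), identical per-batch bounds $\mathcal{O}(nbR)$ and $\mathcal{O}(nmbR)$, and the split cost bounded by at most $2D/d$ splits at $\mathcal{O}(ndR)$ each, summing to $\mathcal{O}(DR)$ with $n,m$ constant. The only substantive difference is in the variable-$d$ remark: the paper's formal proof charges each split at step $t$ against the $\geq \tfrac{1}{2}d(D_t)$ interactions of the newly created cluster, giving $\sum_t \Delta_t\, d(D_t) = \mathcal{O}(D)$, whereas your claim of ``at most $\mathcal{O}(D/d(D))$ splits'' is not justified when $d$ varies (early splits occur while $d(D_t)$ is still small), although the conclusion survives because those early splits are correspondingly cheaper, which is exactly what the paper's amortized charging argument formalizes.
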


\begin{proof}

The total time complexity is the sum of time complexity of embedding optimization and cluster optimization. The cluster optimization consists of split and reassignments.

Firstly, all reassignments can be processed within $\mathcal{O}(DR)$ time since it involves total $D/b$ batch; each batch incurs $\mathcal{O}(nmbR)$ time from $b$ sub-problems with $m$ candidate clusters and a constant number $n$ of buffered history considered.

Secondly, all the split operations can be processed within $\mathcal{O}(DR)$ time complexity. Here, we change $d$ in strategies \ref{strategy:1}\&\ref{strategy:2} to a monotonically non-decreasing sublinear function $d(D_t)$ to make it more general, where $D_t$ denotes the number of interactions in the current training step $t$, which is used to distinguish from $D$, the total number of interactions collected during the entire online learning. Then, strategies~\ref{strategy:1}\&\ref{strategy:2} ensure each newly split cluster in the current training step $t$ will have $\geq\frac{1}{2}d(D_t)$ interactions. This will remain true through the entire training, since the non-decreasing function $d$ ensures that such cluster will still have $\geq\frac{1}{2}d(D_{t'})\geq\frac{1}{2}d(D_{t})$ interactions even it splits in any future iteration $t'>t$. We now denote the number of newly split cluster in the current training step $t$ as $\Delta_t$ and suppose the training lasts for a total of $T$ steps.
Thus we can sum over them and obtain $\sum_{t=1}^{T} \Delta_t\times\frac{1}{2}d(D_t)\leq D$. 
Now, recall that the time complexity of each split operation is $\mathcal{O}(n\|\mathbf{S}_q(:,k)\|_0 R)$ when only a constant number of history considered. 
The time complexity for all splits in step $t$ then requires $\mathcal{O}\big((\Delta_t\times 2d(D_t)+\mathcal{O}(b))nR\big)$ time, where $\mathcal{O}(b)$ accounts for the new interactions from the reassignment or new batch data in step $t$.
As a result, we know that the overall time complexity for all splits are $\mathcal{O}\big(nR\sum_{t=1}^{T} (\Delta_t\times2d(D_t)+\mathcal{O}(b))\big)\leq\mathcal{O}\big(nR (4D+\mathcal{O}(D))\big)=\mathcal{O}(nDR)=\mathcal{O}(DR)$.\footnote{Note that strategy~\ref{strategy:1} incurs at most $\mathcal{O}(DR)$ time complexity additionally.}

Note that the embedding optimization also takes $\mathcal{O}(DR)$ time. So, the overall time complexity of CEL-Lite is $\mathcal{O}(DR)$. 
\end{proof}



\subsection{Complexity Analysis for CEL}
\label{appendix:offline time complexity}

In contrast to CEL-Lite, CEL assumes the data matrix is known a priori to the embedding learning. For CEL, in each step of optimization we will need to process the entire data matrix. 
With our framework in Section~\ref{sec:methodology}, CEL splits once per $t_2$ steps of embedding optimizations, until we have reached a desired cluster number $M^*$ (corresponding to a desired compression ratio). We perform the reassignment once per $t_1$ steps of embedding optimizations.
Suppose the training lasts for a total of $E$ steps (of embedding optimization). $E$ is a constant and maybe not negligible (or even very large) depending on the learning rate and the convergence speed.

\begin{theorem}
CEL with the maximum cluster number being fixed to $M^*$ has a time complexity of $\mathcal{O}\left(\left(E + M^* +\frac{E}{t_1}M^*\right) DR\right)$.
\end{theorem}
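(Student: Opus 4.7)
The plan is to decompose the total runtime of CEL into its three scheduled sub-procedures—embedding optimization, cluster reassignment, and cluster split—bound each component separately by relating its per-invocation cost to $D$, $R$, and $M^*$, and then add the contributions.

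\textbf{Step 1 (Embedding optimization).} Each of the $E$ optimization steps evaluates the objective in Eq.~\eqref{equ:nmf_objective} and its gradient with respect to $\mathbf{A}$ and $\mathbf{B}_q$ (under the reparameterization $\mathbf{B} = \mathbf{S}_q \mathbf{B}_q$). Every observed entry of $\mathbf{X}$ contributes $\mathcal{O}(R)$ work to both forward and backward passes, so one step costs $\mathcal{O}(DR)$. Aggregated over $E$ steps this yields $\mathcal{O}(EDR)$.

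\textbf{Step 2 (Cluster reassignment).} Reassignment is invoked $\lfloor E/t_1 \rfloor$ times. Each invocation decomposes, as in the per-item formulation below Eq.~\eqref{equ:reassignment}, into $M$ independent sub-problems, one for each item $j$. For item $j$, scoring its fit against a single candidate cluster touches only its $\|\mathbf{W}(:,j)\|_0$ observed interactions and costs $\mathcal{O}(\|\mathbf{W}(:,j)\|_0 R)$. Since there are at most $M^*$ candidate clusters, summing over clusters and then over items gives $\mathcal{O}(M^* D R)$ work per invocation, hence $\mathcal{O}\!\left(\frac{E}{t_1} M^* D R\right)$ over all reassignments.

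\textbf{Step 3 (Cluster split).} Every split increases $M_q$ by exactly one and the procedure halts once $M_q = M^*$, so at most $M^* - 1$ splits occur. For a chosen cluster $k$, a GPCA split performs a gradient computation over its associated interactions at cost $\mathcal{O}(\|\mathbf{W}\mathbf{S}_q(:,k)\|_0 R)$, plus an eigendecomposition of the $R \times R$ Gram matrix at cost $\mathcal{O}(\|\mathbf{S}_q(:,k)\|_0 R^2 + R^3)$. The former dominates and is bounded crudely by $\mathcal{O}(DR)$, so each split is $\mathcal{O}(DR)$ and all splits together cost $\mathcal{O}(M^* D R)$. Summing the three contributions gives $\mathcal{O}\!\left((E + M^* + \tfrac{E}{t_1} M^*) D R\right)$, as claimed.

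\textbf{Main obstacle.} The only delicate piece is the split accounting: one might hope to amortize across splits using the fact that the cluster sizes sum to $D$ at any moment, shaving $M^*$ off the split term. Such amortization fails in the worst case because a single cluster can contain nearly all $D$ interactions right before it is split (exactly the configuration CEL-Lite avoids via Strategies~\ref{strategy:1}--\ref{strategy:2}); hence the per-split $\mathcal{O}(DR)$ bound is tight without extra structural assumptions and the claimed complexity cannot be sharpened in the fully general CEL setting.
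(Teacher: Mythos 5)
Your proposal is correct and follows the same overall decomposition as the paper: $\mathcal{O}(EDR)$ for embedding optimization, $\mathcal{O}\!\left(\frac{E}{t_1}M^*DR\right)$ for reassignment (per-item subproblems against at most $M^*$ candidate clusters), and $\mathcal{O}(M^*DR)$ for splits. The only place you diverge is the split accounting: the paper organizes the splits into a binary tree whose node values are the interaction counts of the corresponding clusters, observes that each tree level sums to at most $D$ interactions and hence costs $\mathcal{O}(DR)$, and multiplies by the maximum depth, which it bounds by $M^*$; you instead bound each individual split crudely by $\mathcal{O}(DR)$ (the chosen cluster carries at most $D$ interactions) and multiply by the number of splits, which is at most $M^*-1$ starting from a single cluster. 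The two arguments yield the identical bound, since the paper's level-wise amortization still pays a factor equal to the tree depth, which in the worst (chain-like) case is of order $M^*$; your version is more elementary, and your closing remark correctly identifies why no genuine amortization gain is available without the balancedness constraints that CEL-Lite imposes, so the $M^*DR$ split term cannot be sharpened in the general CEL setting. In short, same theorem, same three-way decomposition, with your split bound being a simpler (and equally tight) counting argument than the paper's tree construction.
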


\begin{proof}

The embedding optimizations have a time complexity of $\mathcal{O}(DR)$ for each step. So the total time complexity of $E$ steps of embedding optimization is $\mathcal{O}(EDR)$. 

For the split operations, we can imagine a binary tree starting from a single root node. The children represent two clusters from the split of their parent, and let the value of a node represents the interactions (number of interactions associated with it). The time complexity will sum up all the nodes in the tree (except leaf nodes). We can easily verify that processing all nodes in a particular level (depth) of the tree sums to $\mathcal{O}(DR)$ time complexity if the level is completely filled (will be less if it is not completely filled), since the  numbers of interactions associated to the nodes sums to $D$. So now we will need to compute the depth of the tree. The tree is strictly a binary tree (but not necessarily full) with a maximum depth $M^*$. Therefore, the splits have a time complexity of $\mathcal{O}(M^* DR)$.

For the reassignment, the time complexity of each reassignment is $\mathcal{O}(M^* D R)$. We have totally $E/t_1$ reassignments, so we have a total time complexity of $\mathcal{O}(E/t_1 M^* D R)$ for reassignment. 

Thus, the overall time complexity is $\mathcal{O}\left(\left(E + M^* +\frac{E}{t_1}M^*\right) DR\right)$.
\end{proof}

\subsection{Practical Time Complexities}
Our theoretical analysis (Section~\ref{sec:online} and above) has shown CEL(-Lite) to be an efficient framework. Empirically, for example, for MovieLens-1M with DIN at the compression ratio of $5\%$ (Table~\ref{tab:movielens ranting prediction}), CEL runs for 1592s, while PreHash ($k=M_q$) takes 7256s. On large-scale datasets, Electronics with DIN at the compression ratio of $10\%$ (Table~\ref{tab:cr predictions}) as the example, CEL-Lite converges within 9366s. In comparison, PreHash ($k=M_q$) takes 31087s and the fastest baseline Modulo takes 6305s.

\section{More Details and Results}
\label{appendix: c the experiments}

\subsection{Initialization}
\label{appendix:Baseline Implementation Details}
All the embeddings (vectors) are randomly initialized: We first initialize the full embedding.
We sample a vector consisting of random Gaussian entries ($\mathcal{N}(0,1)$), then divided by the maximum absolute value to map each entry to a range of $[-1,1]$. For NMF, we take the absolute value of each entry to ensure the nonnegativity. Note that after each embedding optimization, we also apply the absolute function. Secondly, we initialize the cluster assignment such that users/items are assigned to clusters randomly. Then the cluster embeddings are initialized as the center of its associated users/items.

\subsection{Baseline Implementation Details}

\subsubsection{eTree and JNKM implementation details.} For eTree, we adopt a tree structure of depth $3$. The first layer is a single root node, the second layer consists of $M_q$ nodes, while the leaf nodes on the last layer represent the items. For JNKM, there are a total of $M_q$ clusters. 
During test prediction, we only keep the embeddings of second layer nodes of eTree, or the cluster embeddings of JNKM.

\subsubsection{BH and AE implementation details.} BH requires setting a hyperparameter $B$ (integer) that indicates the number of binary code blocks. Note that the total length of the binary coding $L$ should satisfy $2^L\geq M$, while the total embedding size is $B\times2^{L/B}$ should be close to $M_q$. We always choose an appropriate $B$ so that its memory usage $B\times2^{L/B}\geq M_q$ is just slightly larger than the embedding size $M_q$ required by the corresponding compression ratio in each experiment. The rest of the settings follow the original paper \cite{binaryhash}. For AE, we allow the top $M_q/2$ warmest items to have their own embedding, and the rest of the items are assigned to share the other $M_q/2$ embeddings through the Modulo hash method. 

\subsubsection{PreHash implementation details.} PreHash requires setting a hyperparameter $K$ to select the top-$K$ bucket for each user/item. In our experiments, we set $K$ to range from $4$ to $M_q$. The number of buckets $M_q$ is determined by the corresponding compression ratio in each experiment. For the rating prediction with NMF (Table~\ref{tab:movielens ranting prediction}), we do not use the historical interactions of users while hashing. Instead, we learn the hash assignment (weights associated with each bucket) during training. This is to ensure a fair comparison, since the historical interactions are also not used by the other baselines in the comparison. 
For the online conversion rate prediction (Table~\ref{tab:cr predictions}), we follow the original paper and use the user history to calculate the hash assignment. We collect and update the user history after receiving the streamed online data. The buckets are initialized randomly and then bind to the anchor users identified throughout online learning. The same operation is applied to items to reduce the item embedding size accordingly. The rest of the settings follow the original paper \cite{prehash}.

\begin{figure*}[t]
    \centering
    \includegraphics[height=30mm,trim={0 -25mm 0 0},clip]{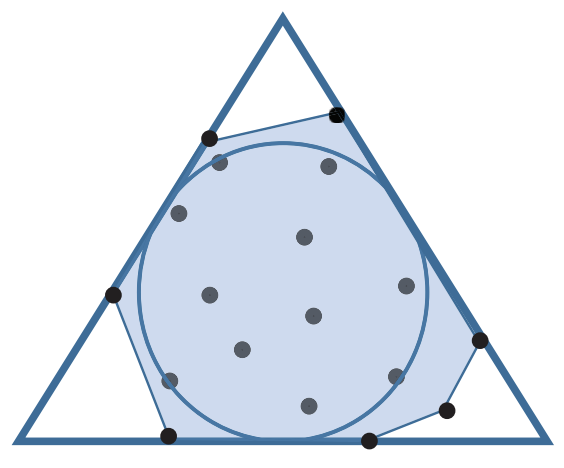}
    \hspace{40pt}
    \includegraphics[height=30mm]{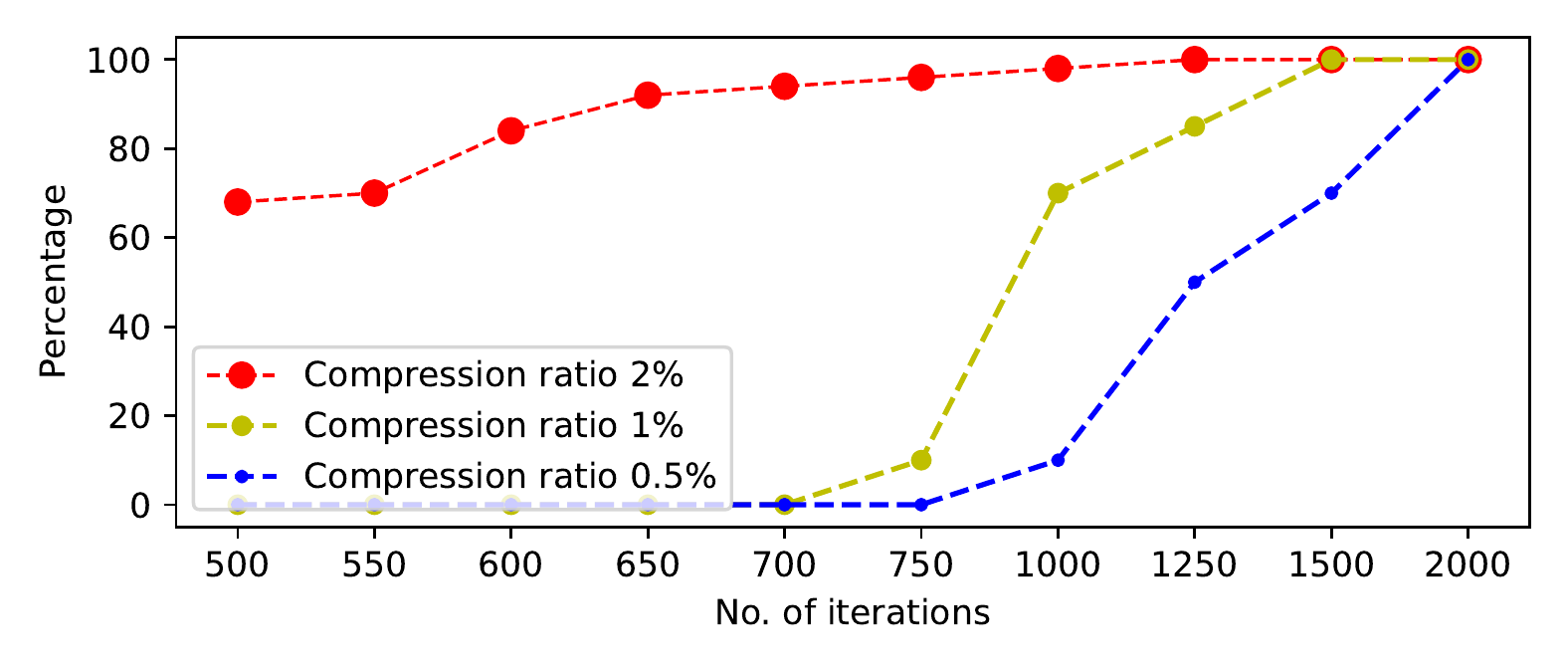}
    
    \hspace{-20mm}(a)\hspace{65mm}(b)
    \caption{(a)  Illustration of the sufficiently scattered condition. The dots are rows of $\mathbf{B}_q$; the triangle is the nonnegative orthant; the circle is the central region $\mathcal{C}$; the shaded region is $cone\{\mathbf{B}_q^\top\}$. Image is taken from a prior work~\cite{identifiability}. (b) The percentage of cluster embeddings (rows of $\mathbf{B}_q$) that lie outside the central region $\mathcal{C}$. We can see that in all settings, eventually all cluster embeddings will lie outside the central region $\mathcal{C}$.} 
    \label{fig:sufficiently scattered}
\end{figure*}

\subsection{Gradient Averaging and Optimizer}
\label{appendix:optimizer}

We optimize $\mathcal{L}$ with respect to $\mathbf{B}_q$ and $\mathbf{A}$ via first order optimizers, e.g., (plain) gradient descent (GD) and Adam. Since the number of assigned items can be significantly different among clusters, there may exhibit significant differences in gradient scales. 
To synchronize gradient descent training among different clusters, we divide the loss of each cluster by the number of associated items to switch from the total loss to the averaged loss:
\begin{equation}
    \Delta \mathbf{B}_q(k,:)=- \frac{\eta}{\|\mathbf{S}_q(:,k)\|_0} \;\;\partial\mathcal{L}_k/\partial \mathbf{B}_q(k,:)
\end{equation}
where $\eta$ is the learning rate, and the divisor is equal to the number of items assigned to the $k^{th}$ cluster. We refer to it as the \emph{gradient averaging}. It ensures that embeddings of all clusters are updated at a similar pace, avoiding larger clusters to be updated much faster.

We performed ablation studies on the optimizer used for gradient descent and the effect of gradient averaging. We found gradient averaging significantly reduces overfitting and keeps training stable. 

Table \ref{tab:gradaveragingvsno} and \ref{tab:optimizers} present the results of CEL with NMF on the MovieLens-100k dataset.\footnote{The trend for other feature interaction models and on other datasets are similar.} The results show that gradient averaging largely improves the training result of GD, while such improvement is less obvious for Adam. It could be because Adam, to some extent, has invariance towards the gradient scale. Among them, GD + gradient averaging achieves the best result overall.

Interestingly, without gradient averaging, we observe that the test MSE will start to go up after $\sim1000$ steps of embedding optimization, while training MSE will keep going down. It is possibly: Without gradient averaging, the embedding of clusters with limited assignments (and associated data) will be updated at the same speed as the big clusters. Such fast update of embeddings for small clusters may \emph{overfit} it to its limited interactions.

\begin{table}[htb]
\centering
\caption{The MSE results of different optimization settings for rating prediction tasks on MovieLens-100k. The item embedding table is compressed.} 
\scalebox{0.9}{
\begin{tabular}{c|cccc}
\hline
\makecell{Compression \\ Ratio} & \makecell{GD w. \\ grad avr} & \makecell{GD wo. \\ grad avr} & \makecell{Adam w. \\ grad avr} & \makecell{Adam wo. \\ grad avr} \\ \hline
$1\%$ & \textbf{0.9022} & 1.023 & 0.9139 & 0.9189 \\
$2\%$ & \textbf{0.9012} & 1.019 & 0.9150 & 0.9280 \\
\hline
\end{tabular}}
\label{tab:gradaveragingvsno}
\end{table}

\begin{table}[h]
\centering
\caption{The MSE results of different optimizers for rating prediction tasks on MovieLens-100k. The item embedding table is compressed to $2\%$.} 
\scalebox{0.9}{
\begin{tabular}{c|c|c|c}
\hline
GD & Adam & AdaGrad & RMSProp \\ \hline
\textbf{0.8654} & 0.8720 & 0.9160 & 0.8754     \\ \hline
\end{tabular}}
\label{tab:optimizers}
\end{table}

\subsection{Norm Regularization}
\label{appendix:scatteredness}

To handle the scaling arbitrary between $\mathbf{A}$ and $\mathbf{B}$ in NMF, we adopt a common solution by adding norm regularizations of $\mathbf{A}$ and $\mathbf{B}_q$ to the original objective function:
\begin{equation}
\label{equ:eonorm}
\textstyle
\mathcal{L}_{reg}=\frac{\lambda_{reg}}{2}\big(\|\mathbf{A}\|_{F}^2+\|\mathbf{B}_q\|_{F}^2\big).
\end{equation}
We set $\lambda_{reg}=1$ for our experiments. 

\subsection{The Sufficiently Scattered Condition}

As introduced in Appendix~\ref{appendix:all about Identifiability}, the sufficiently scattered condition being satisfied essentially means that conic hull of rows of $\mathbf{B}_q$ contains a central region $\mathcal{C}$ of the nonnegative orthant as its subset, illustrated in Figure~\ref{fig:sufficiently scattered}(a). 
Although it is hard to visualize or verify whether the sufficiently scattered condition is satisfied, we can easily check whether a cluster embedding (a row of $\mathbf{B}_q$) lies outside the central region $\mathcal{C}$. We thus perform such studies in Figure~\ref{fig:sufficiently scattered}(b). We can see that in all settings, eventually all cluster embeddings will lie outside the central region $\mathcal{C}$, implying that the embeddings (rows of $\mathbf{B}_q$) tend to satisfy the sufficiently scattered condition. Moreover, a larger compression ratio (which leads to a finer clustering structure) is more likely to get the embeddings to lie outside $\mathcal{C}$. 
Although there is no explicit regularization that encourages the clusters to be scattered, the norm penalty we applied in equation~\eqref{equ:eonorm} tends to push the embeddings to the boundary of the nonnegative orthant, which is a possible cause that we eventually learn a spread set of clusters.

\end{document}